\def\eqref#1{equation~\ref{#1}}
\def\1{\bm{1}}
\def\vzero{{\bm{0}}}
\def\ve{{\bm{e}}}
\def\vf{{\bm{f}}}
\def\vp{{\bm{p}}}
\def\vq{{\bm{q}}}
\def\vu{{\bm{u}}}
\def\vv{{\bm{v}}}
\def\vw{{\bm{w}}}
\DeclareMathAlphabet{\mathsfit}{\encodingdefault}{\sfdefault}{m}{sl}
\SetMathAlphabet{\mathsfit}{bold}{\encodingdefault}{\sfdefault}{bx}{n}
\def\gE{{\mathcal{E}}}
\def\gF{{\mathcal{F}}}
\def\gG{{\mathcal{G}}}
\def\gL{{\mathcal{L}}}
\def\gM{{\mathcal{M}}}
\def\gQ{{\mathcal{Q}}}
\def\sR{{\mathbb{R}}}
\newcommand{\R}{\mathbb{R}}
\newcommand{\PreserveBackslash}[1]{\let\temp=\\#1\let\\=\temp}
\newcolumntype{C}[1]{>{\PreserveBackslash\centering}p{#1}}
\newcolumntype{R}[1]{>{\PreserveBackslash\raggedleft}p{#1}}
\newcolumntype{L}[1]{>{\PreserveBackslash\raggedright}p{#1}}
\renewcommand{\eqref}[1]{(\ref{#1})}
\newcommand{\std}[1]{{\,\fontsize{6pt}{7pt}\selectfont{\scalebox{0.7}[1.0]{\!$\pm$}#1}}}
\newcommand{\verysmall}[1]{{\,\fontsize{6pt}{7pt}\selectfont{\scalebox{0.8}[1.0]{#1}}}}
\newcommand{\zz}{{\textcolor{white}{0}}}
\newcommand{\dee}{{\mathrm{d}}}
\newcommand{\valpha}{{\bm{\alpha}}}
\newcommand{\vbeta}{{\bm{\beta}}}
\newcommand{\NN}{{N\!N}}
\newcommand{\TQ}{{T\gQ}}
\newcommand{\TcQ}{{T^*\gQ}}
\newcommand{\TqQ}{{T_\vq\gQ}}
\newcommand{\TcqQ}{{T^*_\vq\gQ}}
\newcommand{\TM}{{T\gM}}
\newcommand{\TcM}{{T^*\gM}}
\newcommand{\TuM}{{T_\vu\gM}}
\newcommand{\TcuM}{{T^*_\vu\gM}}
\newcommand{\pderiv}[2][]{{\frac{\partial #1}{\partial #2}}}
\newcommand{\Checkmark}{{\textcolor{teal}{\ding{51}}}}
\newcommand{\Crossmark}{{\textcolor{red}{\ding{55}}}}
\newcommand{\Astariskmark}{{\textcolor{black}{\ding{91}}}}
\newsavebox{\@brx}
\newcommand{\llangle}[1][]{\savebox{\@brx}{$\m@th{#1\langle}$}%
  \mathopen{\copy\@brx\kern-0.5\wd\@brx\usebox{\@brx}}}
\newcommand{\rrangle}[1][]{\savebox{\@brx}{$\m@th{#1\rangle}$}%
  \mathclose{\copy\@brx\kern-0.5\wd\@brx\usebox{\@brx}}}
\newtheorem{thm}{Theorem}
\newtheorem{rmk}{Remark}
\newtheorem{dfn}{Definition}
\theoremstyle{definition}
\title{Poisson-Dirac Neural Networks for Modeling Coupled Dynamical Systems across Domains}
\author{
  Razmik Arman Khosrovian$^1$\,
  Takaharu Yaguchi$^2$\,
  Hiroaki Yoshimura$^3$\,
  Takashi Matsubara$^4$
\\
$^1$Osaka University\,
$^2$Kobe University\,
$^3$Waseda University\,
$^4$Hokkaido University\\
}
\begin{document}

\maketitle

\begin{abstract}
  Deep learning has achieved great success in modeling dynamical systems, providing data-driven simulators to predict complex phenomena, even without known governing equations.
  However, existing models have two major limitations: their narrow focus on mechanical systems and their tendency to treat systems as monolithic.
  These limitations reduce their applicability to dynamical systems in other domains, such as electrical and hydraulic systems, and to coupled systems.
  To address these limitations, we propose Poisson-Dirac Neural Networks (PoDiNNs), a novel framework based on the Dirac structure that unifies the port-Hamiltonian and Poisson formulations from geometric mechanics.
  This framework enables a unified representation of various dynamical systems across multiple domains as well as their interactions and degeneracies arising from couplings.
  Our experiments demonstrate that PoDiNNs offer improved accuracy and interpretability in modeling unknown coupled dynamical systems from data.
\end{abstract}

\section{Introduction}
Deep learning has achieved great success in modeling dynamical systems~\citep{Chen2018e,Anandkumar2020}, following its successes in image processing and natural language processing~\citep{He2015a,Vaswani2017}.
It provides a data-driven approach for predicting the behavior of complex dynamical systems, even when governing equations are unknown~\citep{Kasim2022,Matsubara2023ICLR}.
These models function as computational simulators and show promise in applications across diverse fields, including weather forecasting, mechanical design, and system control~\citep{Lam2023,Pfaff2020,Horie2021}.

However, \citet{Greydanus2019} identified a key limitation of data-driven models: they accumulate modeling errors in long-term predictions, leading to rapid failure.
Hamiltonian Neural Networks (HNNs) were proposed to overcome this limitation by incorporating Hamiltonian mechanics.
Inspired by this approach, many studies have developed neural network models that not only learn the dynamics from data, but also adhere to fundamental laws of physics.
Examples include Lagrangian Neural Networks (LNNs)~\citep{Cranmer2020}, Neural Symplectic Forms (NSFs)~\citep{Chen2021NeurIPS}, Constrained HNNs (CHNNs)~\citep{Finzi2020b}, and Dissipative SymODENs~\citep{Zhong2020a}, as shown in Table~\ref{tab:comparison}.
We emphasize that our focus is on modeling unknown dynamical systems from data, which differs from solving known symbolic equations, such as in Physics Informed Neural Networks~\citep{Sirignano2018,Raissi2019,Du2021a}.

Despite recent progress, two key limitations remain in modeling dynamical systems, especially those described by ordinary differential equations (ODEs).
The first limitation is the narrow focus on mechanical systems.
Models applied to other domains, such as electric circuits or magnetic fields, often fail to leverage the governing principles of those systems, such as Kirchhoff's current and voltage laws~\citep{Jin2022, Matsubara2023ICLR}.
The second limitation is that most methods treat the system as a single, monolithic entity.
In reality, many systems consist of interacting components, such as robot arms with multiple joints or electric circuits with various elements~\citep{Yoshimura2006}.
Although the port-Hamiltonian formulation theoretically addresses these interactions, no prior work has fully leveraged its potential.
These limitations prevent current methods from effectively handling multiphysics scenarios, where systems from different domains interact, such as those involving through DC motors~\citep{VanderSchaft2014,Gay-Balmaz2023}.

\begin{wrapfigure}{r}{0.47\textwidth}
  \vspace*{-2mm}
  \hspace*{-2mm}
  \includegraphics[scale=0.33,page=1]{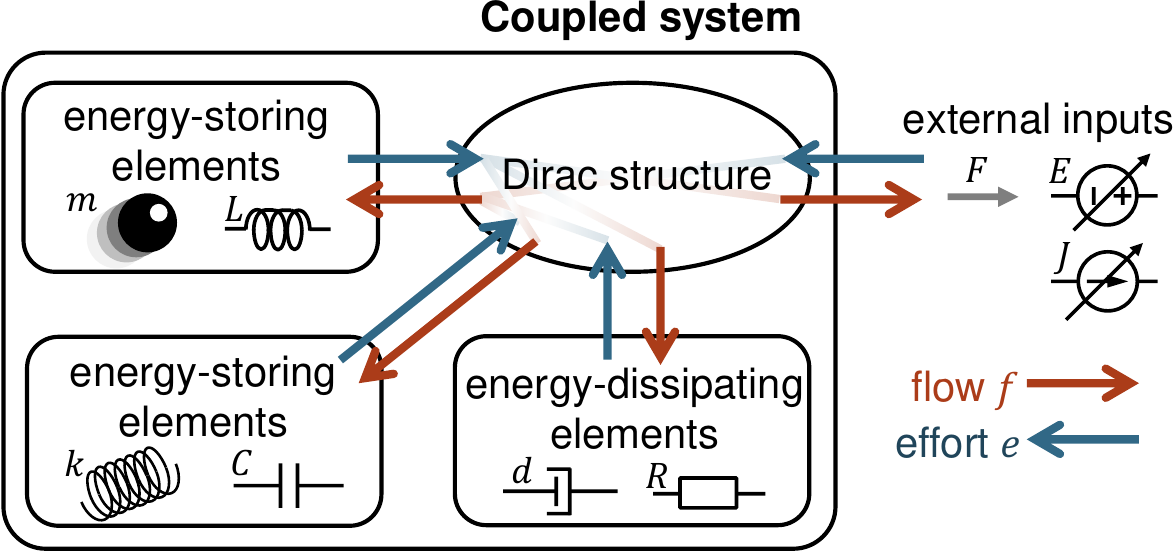}
  \caption{Conceptual diagram of PoDiNNs.}
  \label{fig:concept}
  \vspace*{-3mm}
\end{wrapfigure}

To address these limitations, we propose \emph{Poisson-Dirac Neural Networks (PoDiNNs)}, which leverage the Dirac structure to unify the port-Hamiltonian and Poisson formulations~\citep{Courant1990,Duindam2009,VanDerSchaft1998,VanderSchaft2014,Yoshimura2006}.
The Dirac structure explicitly represents the coupling between internal and external components.
A conceptual diagram is shown in Fig.~\ref{fig:concept}.
The advantages of PoDiNNs are summarized below and in Table~\ref{tab:comparison}, and are validated through experiments on seven simulation datasets spanning mechanical, rotational, electro-magnetic, and hydraulic domains.

\textbf{Identifying Coupling Patterns}\quad
Many real-world systems are coupled systems, consisting of interacting components~\citep{Yoshimura2006}.
While existing methods represent these as single vector fields or energy functions, PoDiNNs explicitly learn internal couplings as bivectors on vector bundles, separate from component-wise characteristics modeled by neural networks.
This approach improves both interpretability and modeling accuracy.

\textbf{Applicable to Various Domains of Systems}\quad
The Dirac structure allows PoDiNNs to describe various dynamical systems across multiple domains.
While most deep learning models of ODEs focus on mechanical systems, PoDiNNs can model interactions of mechanical, rotational, electrical, and hydraulic systems, addressing real-world multiphysics scenarios.

\textbf{Unifying Various Aspects of Dynamical Systems}\quad
Previous models have addressed specific aspects of dynamical systems, such as degenerate dynamics~\citep{Finzi2020b,Jin2022}, energy dissipation, external inputs~\citep{Zhong2020a}, and coordinate transformations~\citep{Chen2021NeurIPS,Jin2022}.
PoDiNNs offer a unified framework to address all these aspects together by leveraging the Dirac structure.

\begin{table}[tb]
  \caption{Comparison of Methods for Modeling Dynamical Systems}
  \label{tab:comparison}
  \centering
  \tabcolsep=2mm
  \scriptsize
  \begin{tabular}{llC{3.8mm}C{3.8mm}C{3.8mm}C{3.8mm}C{3.8mm}C{3.8mm}C{1mm}}
    \toprule
    \textbf{Model}                  & \textbf{Formulation}                   & \rotatebox[origin=l]{60}{\parbox{1.9cm}{\textbf{Identifying                                                                  \\\quad Coupling Patterns}}} & \rotatebox[origin=l]{60}{\parbox{1.9cm}{\textbf{Applicable to \\\quad \quad Multiphysics}}} & \rotatebox[origin=l]{60}{\textbf{Degeneracy}} & \rotatebox[origin=l]{60}{\textbf{Dissipation}} & \rotatebox[origin=l]{60}{\textbf{External Inputs}} & \rotatebox[origin=l]{60}{\textbf{Coordinate-Free}}&\\
    \midrule
    HNNs~\citep{Greydanus2019}       & canonical Hamiltonian                  & \Crossmark                                                  & \Crossmark & \Crossmark & \Crossmark & \Crossmark & \Crossmark \\
    LNNs~\citep{Cranmer2020}         & Lagrangian                             & \Crossmark                                                  & \Crossmark & \Crossmark & \Crossmark & \Crossmark & \Crossmark \\
    NSFs~\citep{Chen2021NeurIPS}     & general Hamiltonian                    & \Crossmark                                                  & \Crossmark & \Crossmark & \Crossmark & \Crossmark & \Checkmark \\
    CHNNs~\citep{Finzi2020b}         & constrained canonical Hamiltonian      & \Crossmark                                                  & \Crossmark & \Astariskmark          & \Crossmark & \Crossmark & \Crossmark \\
    CLNNs~\citep{Finzi2020b}         & constrained Lagrangian                 & \Crossmark                                                  & \Crossmark & \Astariskmark          & \Crossmark & \Crossmark & \Crossmark \\
    PNNs~\citep{Jin2022}             & Poisson                                & \Crossmark                                                  & \Crossmark & \Checkmark & \Crossmark & \Crossmark & \Checkmark \\
    Dis.~SymODENs~\citep{Zhong2020a} & port-Hamiltonian on Darboux coordinates & \Crossmark                                                  & \Crossmark & \Crossmark & \Checkmark & $\dagger$  & \Crossmark \\
    \midrule
    PoDiNNs (proposed)              & poisson-Dirac with ports               & \Checkmark                                                  & \Checkmark & \Checkmark & \Checkmark & \Checkmark & \Checkmark \\
    \bottomrule
  \end{tabular}\\
  \raggedright
  \Astariskmark Available only for known holonomic constraints.\quad
  $\dagger$ Available only for external force on mass.
\end{table}
\section{Background Theory and Related Work}\label{sec:background}
Attempts to learn ODEs have a long history~\citep{Nelles2001}, but Neural Ordinary Differential Equations (Neural ODEs) transformed this field~\citep{Chen2018e}.
Neural ODEs train networks to approximate the vector field (the right-hand side of the ODE), which is then integrated by a numerical integrator.
However, these models are often overly general and fail to capture dynamical systems governed by principles such as energy conservation~\citep{Greydanus2019}.
To address this, many refinements have been proposed by integrating insights from analytical mechanics.

\subsection{Hamiltonian Systems}
\textbf{Canonical Hamiltonian Systems}\quad
The Hamiltonian formulation of mechanics defines Hamilton's equations of motion, using a pair of $n$-dimensional generalized coordinates $\vq=(q_1,\dots,q_n)$ and momenta $\vp=(p_1,\dots,p_n)$ as the state.
Given an energy function $H$, the system dynamics evolves as:
\begin{equation}\textstyle
  \dot q_i=\frac{\partial H}{\partial p_i},\quad \dot p_i=-\frac{\partial H}{\partial q_i}, \text{ or equivalently, }
  \begin{psmallmatrix}
    \dot\vq \\
    \dot\vp
  \end{psmallmatrix}=
  \begin{psmallmatrix}
    O  & I \\
    -I & O
  \end{psmallmatrix}
  \nabla H(\vq,\vp)
  \label{eq:darboux_hamiltonian}
\end{equation}
where $\dot{\ }$ denotes the time derivative, $I$ and $O$ are the $n$-dimensional identity and zero matrices, respectively, and $\nabla H$ denotes the gradient of $H$.
Systems describable by Hamilton's equations are Hamiltonian systems.
See Appendix~\ref{appendix:example_of_expression:mass-spring} for an example.

HNNs were introduced to learn such systems, with a neural network $H_\NN$ serving as the energy function $H$~\citep{Greydanus2019}.
HNNs have shown greater long-term robustness than Neural ODEs~\citep{Chen2022AAAI,Gruver2022}.

\textbf{Hamiltonian Systems of Coordinate-Free Form}\quad
From the differential geometry perspective, the equations of motion in Eq.~\eqref{eq:darboux_hamiltonian} are defined on the cotangent bundle $\gM=\TcQ$ of the configuration space $\gQ=\R^n$ with the standard symplectic form $\Omega=\sum_i \dee q_i\wedge\dee p_i$, where $\vq\in\gQ$ and $\vp\in\TcqQ$, respectively.
Here, $\TcqQ$ denotes the cotangent space to $\gQ$ at $\vq$, and $\wedge$ denotes the exterior product.
The symplectic form $\Omega$ leads to a skew-symmetric linear bundle map $\Omega^\flat_\vu:\TuM\to\TcuM$ at each point $\vu\in\gM$, which satisfies $\langle\Omega^\flat_\vu(\vv),\vw\rangle=\Omega_\vu(\vv,\vw)$ for any $\vu\in\gM$ and $\vv,\vw\in\TuM$, where $\langle\cdot,\cdot\rangle$ denotes the natural pairing.
We will denote by the subscript ${}_\vu$ an assignment at point $\vu$ and omit the subscripts if an equation holds independently of the point $\vu$.
Given a smooth function $H:\gM\to\R$, its differential is denoted by $\dee H$.
A vector field $X$ on $\gM$ assigns a tangent vector $X_\vu\in\TuM$ to each point $\vu\in\gM$.
Then, Hamilton's equations in the coordinate-free form defines a vector field $X_H$ called the Hamiltonian vector field by
\begin{equation}\label{eq:hamilton_equation}
  \Omega^\flat (X_H)=\dee H.
\end{equation}
$X_H$ defines the time evolution of the state $\vu$ as $\dot\vu(t) = (X_H)_{\vu(t)}$ at time $t$, and $\vu(t)$ over a certain period is a solution of the system.
Then, the tuple $(H,\Omega,X_H)$ is called a Hamiltonian system.
A Hamiltonian system with the standard symplectic form $\Omega$ is said to be canonical, and its coordinate system is known as Darboux coordinates.

Hamilton's equations also describe the same dynamics using generalized velocities $\vv\in\TqQ$ instead of generalized momenta $\vp\in\TcqQ$, i.e., on the tangent bundle $\TQ$ rather than the cotangent bundle $\TcQ$.
In this case, the symplectic form $\Omega$ is replaced by a Lagrangian 2-form~\citep{Marsden1999a}.
From another viewpoint, the symplectic form $\Omega$ defines the coordinate system.
Lagrangian Neural Networks (LNNs) learn the dynamics on the tangent bundle $\TQ$~\citep{Cranmer2020}.
Neural Symplectic Forms (NSFs) learn the symplectic form directly from data, generalizing HNNs and LNNs for arbitrary coordinate systems~\citep{Chen2021NeurIPS}.
In general, Hamiltonian systems preserve the energy $H$, as $\gL_{X_H}H=\langle\dee H,X_H\rangle=\Omega(X_H,X_H)=0$, where $\gL_{X_H}$ denotes the Lie derivative along $X_H$, and the last equality follows from the skew-symmetry of $\Omega$.

\textbf{Poisson Systems}\quad
Because the symplectic form $\Omega$ is non-degenerate in the sense that the bundle map $\Omega^\flat_\vu$ is non-degenerate, it leads to a 2-tensor $B$ called a Poisson bivector satisfying $B(\dee H,\dee G)=\Omega(X_H,X_G)$ for any smooth functions $H,G$ on $\gM$.
$B$ leads to a skew-symmetric linear bundle map $B^\sharp_\vu:\TcuM\to\TuM$ at each point $\vu\in\gM$, which satisfies $B_\vu(\valpha,\vbeta)=\langle \valpha,B^\sharp_\vu(\vbeta)\rangle$ for any $\vu\in\gM$ and $\valpha, \vbeta\in\TcuM$.
Then, it holds that $B^\sharp=(\Omega^\flat)^{-1}$.
Hamilton's equations, Eq.~\eqref{eq:hamilton_equation}, are rewritten as
\begin{equation}\label{eq:poisson_equation}
  X_H=B^\sharp(\dee H).
\end{equation}
The tuple $(H,B,X_H)$ is called a Poisson system.
On the Darboux coordinates, $B=\sum_i \pderiv{p_i}\wedge\pderiv{q_i}$.

\subsection{Degeneracy, Dissipation, and External Inputs}

\textbf{Degenerate Systems}\quad
If the state $\vu$ is constrained to a submanifold $\tilde\gM\subset \gM$, Hamilton's equations do not directly describe the dynamics.
For example, consider a pair of mass-spring systems, indexed by $i \in \{1,2\}$, with a constraint $q_1=q_2$ such that the two masses are coupled and always have the same displacement and velocity.
While the Hamiltonian is the sum of those of coupled systems, Hamilton's equations $\Omega^\flat(X_H)=\dee H$ with the standard symplectic form $\Omega$ do not describe the dynamics.
There are two primary methods for handling such degenerate Hamiltonian systems.

The first approach uses coordinate transformations to reduce the system's degrees of freedom and define a submanifold $\tilde\gM\subset\gM$, where Hamilton's equations describe the dynamics using the standard symplectic form $\Omega$ or Poisson bivector $B$.
The Darboux-Lie theorem ensures the local existence of such transformations.
The standard Poisson bivector $B$ on $\tilde\gM$ can also be expressed on $\gM$ while its bundle map is degenerate, and it represents the coordinate transformation.
Hence, degenerate Hamiltonian systems are included into Poisson systems.
See Appendix~\ref{appendix:example_of_expression:degenerate} for an example.
\citet{Jin2022} introduced Poisson neural networks (PNNs) that combine neural network-based coordinate transformations with SympNets for degenerate systems~\citep{Dinh2016,Jin2020}.
However, this approach lacks interpretability due to non-unique, nonlinear transformations, and extending it to systems with external inputs or dissipation is challenging.

The other approach introduces constraint forces from the coupling, defining constrained Hamiltonian systems.
\citet{Finzi2020b} proposed constrained HNNs (CHNNs) for such systems.
However, this method is limited to holonomic constraints (e.g., constraints on configurations).
Furthermore, as the constraints are predetermined, it remains unclear how to learn these constraints directly from data.

\textbf{Port-Hamiltonian Systems}\quad
To incorporate dissipation and external inputs, some studies have explored the port-Hamiltonian formulation with neural networks~\citep{Zhong2020a}.
Although some have mentioned the underlying Dirac structure~\citep{Eidnes2023,Neary2023,DiPersio2024}, they rely on the following canonical form without fully leveraging its flexibility:
\begin{equation}\label{eq:port-hamiltonian}
  \textstyle
  \begin{psmallmatrix}
    \dot\vq \\
    \dot\vp
  \end{psmallmatrix}=
  \left(
  \begin{psmallmatrix}
    O  & I      \\
    -I & O
  \end{psmallmatrix}
  -
  \begin{psmallmatrix}
    O & O      \\
    O & D(\vq)
  \end{psmallmatrix}
  \right)
  \nabla H(\vq,\vp)+
  \begin{psmallmatrix}
    \vzero \\
    G(\vq)
  \end{psmallmatrix}\vf,
\end{equation}
where $D\in\R^{n\times n}$ represents dissipation, $\vf\in\R^{m}$ is the control input vector, and $G\in\R^{n\times m}$ is the control input matrix.
This formulation has several disadvantages.
The matrix $  \begin{psmallmatrix}
    O & I \\
    -I & O
  \end{psmallmatrix}$ represents the standard symplectic form $\Omega = \sum_i \dee q_i\wedge\dee p_i$ on $\sR^{2n}$, which cannot handle degeneracies.
Dissipation from multiple sources, such as dampers or friction, is condensed into a single term $D$, limiting interpretability and modeling performance.
Additionally, this formulation cannot handle externally defined velocities, as seen in models of buildings shaken by the ground.

\section{Poisson-Dirac Neural Networks}\label{sec:method}
To overcome the limitations of existing methods, we propose Poisson-Dirac Neural Networks (PoDiNNs), which leverage the Dirac structure to unify Poisson and port-Hamiltonian systems.
Detailed background theory and proofs of the following theorems are provided in Appendix~\ref{appendix:hamilton-dirac}.

\subsection{Dirac Structure}
Let $V$ be an $n$-dimensional vector space and $V^*$ its dual space, with the natural pairing $\langle\cdot,\cdot\rangle$ between them.
Define the symmetric pairing $\llangle\cdot,\cdot\rrangle$ on $V\oplus V^*$ as
\begin{equation*}
  \llangle (\vv, \valpha), (\bar{\vv}, \bar{\valpha}) \rrangle = \langle \valpha, \bar{\vv} \rangle + \langle \bar{\valpha}, \vv \rangle \quad\text{for}\quad (\vv, \valpha), (\bar{\vv}, \bar{\valpha}) \in V \oplus V^*
\end{equation*}
where $\oplus$ denotes the direct sum of two vector spaces~\citep{Courant1990}.

\begin{dfn}[\citet{Courant1990,Yoshimura2006}]\label{dfn:dirac}
  A Dirac structure on a vector space $V$ is a vector subspace $D \subset V \oplus V^*$ such that $D = D^\perp$, where $D^\perp$ is the orthogonal complement of $D$ with respect to the pairing $\llangle\cdot,\cdot\rrangle$.
\end{dfn}

Since $D=D^\perp$, $\langle \valpha, \bar{\vv} \rangle + \langle \bar{\valpha}, \vv \rangle = 0$ for any $(\vv, \valpha), (\bar{\vv}, \bar{\valpha}) \in D$, and hence $\langle\valpha,\vv\rangle=0$ for any $(\vv, \valpha)\in D$.
A typical Dirac structure can be constructed as follows:

\begin{thm}[\citet{Courant1990,Yoshimura2006}]\label{thm:construction_Dirac}
  Let $V$ be a vector space and $\Delta$ a vector subspace.
  Define the annihilator $\Delta^\circ$ of $\Delta$ as
  $\Delta^\circ = \{ \valpha\in V^* \mid \langle\valpha,\vv\rangle = 0 \text{ for all } \vv \in \Delta \}\subset V^*.$
  Then, $D_V = \Delta \oplus \Delta^\circ\subset V\oplus V^*$ is a Dirac structure on $V$.
\end{thm}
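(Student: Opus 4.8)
The plan is to establish the two defining properties of a Dirac structure in turn: first that $D_V$ is isotropic with respect to $\llangle\cdot,\cdot\rrangle$ (so that $D_V\subseteq D_V^\perp$), and then that $D_V$ has the maximal possible dimension $n$, which forces the equality $D_V=D_V^\perp$.

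First I would verify isotropy. Take two arbitrary elements $(\vv,\valpha),(\bar\vv,\bar\valpha)\in D_V=\Delta\oplus\Delta^\circ$, so that $\vv,\bar\vv\in\Delta$ and $\valpha,\bar\valpha\in\Delta^\circ$. By the definition of the annihilator, $\langle\valpha,\bar\vv\rangle=0$ and $\langle\bar\valpha,\vv\rangle=0$, whence $\llangle(\vv,\valpha),(\bar\vv,\bar\valpha)\rrangle=\langle\valpha,\bar\vv\rangle+\langle\bar\valpha,\vv\rangle=0$. Since this holds for every such pair, $D_V\subseteq D_V^\perp$.

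Next I would run a dimension count. The pairing $\llangle\cdot,\cdot\rrangle$ is non-degenerate on the $2n$-dimensional space $V\oplus V^*$: given a nonzero element, either its $V$-component or its $V^*$-component is nonzero, and one can pair it against a suitable $(0,\bar\valpha)$ or $(\bar\vv,0)$ to obtain a nonzero value. Non-degeneracy yields $\dim W+\dim W^\perp=2n$ for every subspace $W$. Using the standard finite-dimensional fact $\dim\Delta^\circ=n-\dim\Delta$, we obtain $\dim D_V=\dim\Delta+\dim\Delta^\circ=n$, and therefore $\dim D_V^\perp=2n-n=n=\dim D_V$. Combined with the inclusion $D_V\subseteq D_V^\perp$ from the previous step, equal dimensions force $D_V=D_V^\perp$, as required.

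The only genuine subtlety — and the step I would be most careful about — is justifying this dimension arithmetic: one must confirm that $\llangle\cdot,\cdot\rrangle$ is non-degenerate so that $\dim W^\perp=2n-\dim W$ applies, together with $\dim\Delta^\circ=n-\dim\Delta$. An alternative that avoids the dimension count is to prove the reverse inclusion $D_V^\perp\subseteq D_V$ directly: for $(\vv,\valpha)\in D_V^\perp$, testing against elements $(\bar\vv,0)$ with $\bar\vv\in\Delta$ forces $\valpha\in\Delta^\circ$, while testing against $(0,\bar\valpha)$ with $\bar\valpha\in\Delta^\circ$ forces $\vv\in(\Delta^\circ)^\circ=\Delta$ via the finite-dimensional double-annihilator identity, so $(\vv,\valpha)\in\Delta\oplus\Delta^\circ=D_V$. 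Either route closes the argument.
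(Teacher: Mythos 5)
Your proposal is correct, but your primary argument takes a genuinely different route from the paper's. The paper proves both inclusions directly: isotropy gives $D_V \subset D_V^\perp$ exactly as in your first step, and for the reverse inclusion it takes $(\vw,\vbeta)\in D_V^\perp$ and exploits the direct-sum structure of $D_V$ --- setting $\valpha=0$ in the defining condition forces $\langle\vbeta,\vv\rangle=0$ for all $\vv\in\Delta$, hence $\vbeta\in\Delta^\circ$, while setting $\vv=\vzero$ forces $\langle\valpha,\vw\rangle=0$ for all $\valpha\in\Delta^\circ$, hence $\vw\in(\Delta^\circ)^\circ=\Delta$. This is precisely the ``alternative route'' you sketch in your closing paragraph. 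Your main route instead stops after isotropy and closes the argument by a dimension count: non-degeneracy of $\llangle\cdot,\cdot\rrangle$ yields $\dim W+\dim W^\perp=2n$ for every subspace $W$ of $V\oplus V^*$, and $\dim\Delta^\circ=n-\dim\Delta$ gives $\dim D_V=n=\dim D_V^\perp$, so the inclusion is an equality. Both arguments are sound. The dimension count buys a conceptual link to the standard characterization of Dirac structures as maximally isotropic subspaces --- it shows that \emph{any} $n$-dimensional isotropic subspace is Dirac, whether or not it splits as $\Delta\oplus\Delta^\circ$ --- at the cost of invoking the orthogonal-complement dimension formula for a (possibly indefinite) non-degenerate symmetric form. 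The paper's route is more self-contained, needing only the double-annihilator identity, but note that both proofs ultimately rest on finite-dimensionality (reflexivity of $V$, which underlies $(\Delta^\circ)^\circ=\Delta$ just as much as your dimension arithmetic), so neither is more general in that respect.
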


A vector bundle $\gF$ over a manifold $\gM$ is a collection of vector spaces $\gF_\vu$, called fibers, smoothly assigned to points $\vu\in\gM$, where the manifold $\gM$ is called the base space.
A typical example is the tangent bundle $\TM$, where the fiber at $\vu$ is the tangent space $\TuM$, and its dual bundle is the cotangent bundle $\TcM$.
The Whitney sum $\oplus$ defines a vector bundle whose fiber at each point is the direct sum of the fibers of the two bundles at that point.
Given a vector bundle $\gF$  over $\gM$ and its dual $\gE=\gF^*$, the Dirac structure is defined as a subbundle of $\gF\oplus\gE$.

\begin{dfn}[\citet{Courant1990,Yoshimura2006,VanderSchaft2014}]\label{dfn:dirac_on_bundle}
  Consider a vector bundle $\gF$ over a manifold $\gM$.
  A distribution $\Delta$ is a collection of vector subspaces $\Delta_\vu\subset\gF_\vu$, each assigned smoothly to $\gM$ at point $\vu$, forming a vector subbundle of $\gF$.
  The annihilator $\Delta^\circ$ of $\Delta$ is a collection of the annihilators $\Delta^\circ_\vu$ of $\Delta_\vu$, also forming a subbundle of $\gE=\gF^*$.
  Then, a Dirac structure $D$ is constructed as $D=\Delta\oplus\Delta^\circ$, which is a subbundle of $\gF\oplus\gE$.
\end{dfn}

If $\gF\oplus\gE=TM\oplus\TcM$, the Dirac structure $D$ can reformulate Hamiltonian, Poisson, and constrained Hamiltonian systems (see Appendix~\ref{appendix:hamilton-dirac}).
Here, we assume that the fibers $\gF_\vu$ and $\gE_\vu$ of the vector bundles $\gF$ and $\gE$ at $\vu$ are decomposed as
\begin{equation}\label{eq:extended_bundle}
  \gF_\vu=\gF^S_\vu\oplus\gF^R_\vu\oplus\gF^I_\vu \text{\quad and\quad } \gE_\vu=\gE^S_\vu\oplus\gE^R_\vu\oplus\gE^I_\vu,
\end{equation}
where $\gF^S_\vu=\TuM$ and $\gE^S_\vu=\TcuM$.
A point on the fibers $\gF_\vu$ and $\gE_\vu$ is denoted by $\vf=(\vf^S,\vf^R,\vf^I)$ and $\ve=(\ve^S,\ve^R,\ve^I)$, respectively.
We refer to $\vf\in\gF_\vu$ as \emph{flows}, $\ve\in\gE_\vu$ as \emph{efforts}, and both collectively as \emph{port variables}.
Note that our definitions mainly followed those in \citet{Duindam2009,VanderSchaft2014}, while we can find other definitions in \citet{Yoshimura2006}.

\begin{thm}\label{thm:poisson_dirac}
  Consider vector bundles $\gF$ and $\gE=\gF^*$ over $\gM$.
  The collection of
  \begin{equation*}
    D_\vu=\{(\vf,\ve)\in\gF_\vu\times\gE_\vu\mid\vf=B^\sharp_\vu(\ve)\}
  \end{equation*}
  for the bundle map $B^\sharp_\vu:\gE_\vu\to\gF_\vu$ of a bivector $B$ is a Dirac structure $D\subset\gF\oplus\gE$.
\end{thm}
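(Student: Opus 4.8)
The plan is to verify the pointwise statement — that each $D_\vu$ is a Dirac structure on the vector space $\gF_\vu$ in the sense of Definition~\ref{dfn:dirac} — and then check that the fibers assemble into a smooth subbundle. Fix a point $\vu$ and write $V=\gF_\vu$, so that $V^*=\gE_\vu$ and the symmetric pairing on $V\oplus V^*$ reads $\llangle(\vf,\ve),(\bar\vf,\bar\ve)\rrangle=\langle\ve,\bar\vf\rangle+\langle\bar\ve,\vf\rangle$. The goal is to show $D_\vu=D_\vu^\perp$, and I would split this into an isotropy claim and a dimension count.

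First I would establish isotropy, i.e.\ $D_\vu\subseteq D_\vu^\perp$. Any two elements of $D_\vu$ have the form $(B^\sharp_\vu(\ve),\ve)$ and $(B^\sharp_\vu(\bar\ve),\bar\ve)$ with $\ve,\bar\ve\in\gE_\vu$. Substituting into the pairing yields $\langle\ve,B^\sharp_\vu(\bar\ve)\rangle+\langle\bar\ve,B^\sharp_\vu(\ve)\rangle=B_\vu(\ve,\bar\ve)+B_\vu(\bar\ve,\ve)$, where I use the defining relation $B_\vu(\valpha,\vbeta)=\langle\valpha,B^\sharp_\vu(\vbeta)\rangle$. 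The skew-symmetry of the bivector $B$ makes these two terms cancel, so the pairing vanishes and $D_\vu$ is isotropic. Note that this step uses only skew-symmetry; no Jacobi (integrability) condition is needed, which is consistent with Definition~\ref{dfn:dirac} asking solely for the maximal-isotropy identity $D=D^\perp$.

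Second, a dimension count upgrades isotropy to equality. The set $D_\vu$ is the graph of the map $\ve\mapsto B^\sharp_\vu(\ve)$, and since the second coordinate recovers $\ve$, this parametrization $\ve\mapsto(B^\sharp_\vu(\ve),\ve)$ is injective; hence $\dim D_\vu=\dim\gE_\vu=n$ regardless of whether $B^\sharp_\vu$ is invertible. The symmetric pairing $\llangle\cdot,\cdot\rrangle$ is non-degenerate on the $2n$-dimensional space $V\oplus V^*$, so $\dim D_\vu^\perp=2n-\dim D_\vu=n$. Combined with $D_\vu\subseteq D_\vu^\perp$ and $\dim D_\vu=\dim D_\vu^\perp$, this forces $D_\vu=D_\vu^\perp$.

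Finally I would address the bundle structure: because $B$ is a smooth bivector field, the bundle map $B^\sharp_\vu$ depends smoothly on $\vu$, so the graphs $D_\vu$ vary smoothly and form a rank-$n$ subbundle of $\gF\oplus\gE$ whose fibers are Dirac structures in the sense of Definition~\ref{dfn:dirac}; this is precisely what it means for $D$ to be a Dirac structure on the bundle. I do not anticipate a genuine obstacle. The only point requiring care is the dimension bookkeeping — specifically, resisting any appeal to invertibility of $B^\sharp_\vu$, since the map may well be degenerate in exactly the Poisson/degenerate cases this framework is intended to capture, and the injectivity of the graph parametrization is what keeps $\dim D_\vu=n$ even then.
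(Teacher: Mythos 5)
Your proposal is correct, and the isotropy half ($D_\vu \subseteq D_\vu^\perp$ via skew-symmetry of $B$) coincides exactly with the paper's argument. Where you diverge is the maximality half: the paper proves the reverse inclusion $D_\vu^\perp \subseteq D_\vu$ directly --- taking $(\bar\vf,\bar\ve) \in D_\vu^\perp$, expanding the pairing condition to get $\langle\ve,\bar\vf\rangle = B_\vu(\ve,\bar\ve) = \langle\ve,B^\sharp_\vu(\bar\ve)\rangle$ for all $\ve \in \gE_\vu$, and invoking non-degeneracy of the natural pairing between $\gF_\vu$ and $\gE_\vu$ to conclude $\bar\vf = B^\sharp_\vu(\bar\ve)$ --- whereas you replace this with a dimension count: $D_\vu$ is the graph of $B^\sharp_\vu$, hence $n$-dimensional, and non-degeneracy of $\llangle\cdot,\cdot\rrangle$ forces $\dim D_\vu^\perp = 2n - n = n$, so isotropy plus equal finite dimensions gives equality. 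Both routes correctly avoid any appeal to invertibility of $B^\sharp_\vu$, which is essential since degeneracy of the bundle map is precisely the case the framework targets. The paper's direct argument is more self-contained (it needs only non-degeneracy of the canonical $V$--$V^*$ pairing, not the orthogonal-complement dimension formula), and it identifies explicitly which element of $\gE_\vu$ witnesses membership in $D_\vu$; your argument is the standard ``maximally isotropic = isotropic of half dimension'' characterization, which is arguably cleaner, generalizes immediately to any graph-type construction (e.g., graphs of skew maps $\gF_\vu \to \gE_\vu$, recovering the symplectic-induced Dirac structures of Theorem~\ref{thm:induced_dirac} when there are no constraints), but silently requires finite-dimensional fibers, which holds here. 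Your closing remark on smooth dependence of the fibers on $\vu$ addresses the subbundle condition that the paper's proof leaves implicit, which is a small but genuine completeness gain.
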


Here, we define PoDiNNs as a special case of the Poisson-Dirac formulation~\citep{Courant1990}.

\begin{dfn}[Poisson-Dirac Neural Network]\label{dfn:podinn}
  Let $\gF\oplus\gG$ be a vector bundle over a manifold $\gM$ defined in Eq.~\eqref{eq:extended_bundle}, which assigns to each $\vu\in\gM$ a vector space $(\gF^S_\vu\oplus\gF^R_\vu\oplus\gF^I_\vu)\times(\gE^S_\vu\oplus\gE^R_\vu\oplus\gE^I_\vu)$.
  Let $D\subset\gF\oplus\gE$ be a Dirac structure defined in Theorem~\ref{thm:poisson_dirac}.
  Let $H: \gM \to \mathbb{R}$ be an energy function, which determines the effort $\ve^S=\dee H\in\gE^S_\vu$.
  The effort $\ve^R\in\gE^R_\vu$ is determined by a mapping $R_\vu:\gF^R_\vu\to\gE^R_\vu$ of the flow $\vf^R\in\gF^R_\vu$.
  The effort $\ve^I(t)\in\gE^I_\vu$ is a time-dependent function.
  The functions $H$ and $R_\vu$ are implemented using neural networks.
  If for each $\vu(t)$ and $t \in [a, b]$, it holds that
  \begin{equation*}
    ((\vf^S(t),\vf^R(t),\vf^I(t)), (\ve^S(t),\ve^R(t),\ve^I(t))) \in D_{\vu(t)},
  \end{equation*}
  the tuple $(H,B,R,\ve^I,\vf^S)$ is called Poisson-Dirac Neural Networks (PoDiNNs).
\end{dfn}

\begin{table}[t]
  \caption{Categorization of Components in Different Domains}
  \label{tab:domains}
  \centering
  \scriptsize
  \tabcolsep=.7mm
  \begin{tabular}{lllllllll}
    \toprule
    \textbf{Domain}    & \multicolumn{2}{c}{\textbf{Mechanical}} & \multicolumn{2}{c}{\textbf{Rotational}} & \multicolumn{2}{c}{\textbf{Electro-Magnetic}} & \multicolumn{1}{c}{\textbf{Hydraulic}}                                                                                                                            \\
    \cmidrule(lr){2-3}\cmidrule(lr){4-5}\cmidrule(lr){6-7}\cmidrule(lr){8-8}
    \textbf{Subdomain} & \multicolumn{1}{c}{\textbf{Potential}}  & \multicolumn{1}{c}{\textbf{Kinetic}}    & \multicolumn{1}{c}{\textbf{Potential}}        & \multicolumn{1}{c}{\textbf{Kinetic}}       & \multicolumn{1}{c}{\textbf{Electric}} & \multicolumn{1}{c}{\textbf{Magnetic}} & \multicolumn{1}{c}{\textbf{Potential}}   \\
    \midrule
    flow (input)       & velocity                                & force                                   & angular velocity                              & torque                                     & current                               & voltage                               & volume flow rate                         \\
    effort (output)    & force                                   & velocity                                & torque                                        & angular velocity                           & voltage                               & current                               & pressure                                 \\
    state              & displacement                            & momentum                                & angle                                         & \scalebox{0.9}[1.0]{angular momentum}      & electric charge                       & magnetic flux                         & volume                                   \\
    \midrule
    energy-storing     & spring                                  & mass                                    & (potential)                                   & inertia                                    & capacitor                             & inductor                              & hydraulic tank                           \\
    energy-dissipating & damper                                  & --                                      & friction                                      & --                                         & resistor                              & resistor                              & ---                                      \\
    external input     & external force                          & moving boundary                         & external torque                               & --                                         & voltage source                        & current source                        & \scalebox{0.9}[1.0]{incoming fluid flow} \\
    \bottomrule
  \end{tabular}
\end{table}

\subsection{Flows and Efforts for Components}
The point $\vu$ at the base space $\gM$ represents the states of the dynamics, which include the displacement of a spring, the momentum of a mass, the angle and angular momentum of a rotating rod, the electric charge of a capacitor.
Intuitively, flows $\vf$ are the inputs to components, while efforts $\ve$ are the outputs from the components.
Components considered in our formulation are summarized in Table~\ref{tab:domains}, with concrete examples in Appendices~\ref{appendix:bivector} and \ref{appendix:datasets}.

Similar to Hamiltonian and Poisson systems, the flow $\vf^S\in\TuM$ represents the vector field $X_H$ on $\gM$, defining the time evolution $\dot{\vu}$ of the state $\vu$.
The effort $\ve^S\in\TcuM$ corresponds to the differential $\dee H$ of the Hamiltonian $H:\gM\to\sR$.
Superscript ${}^S$ indicates that these components store energy.
In electric circuits, capacitors and inductors are examples, with states as electric charge and magnetic flux, and efforts as the voltages across and currents through them, respectively.
In PoDiNNs, these components are modeled using neural networks that replace the energy functions $H$, similar to HNNs and Dissipative SymODENs.

The flow $\vf^R\in\gF^R_\vu$ and effort $\ve^R\in\gE^R_\vu$ represent energy-dissipating components, such as dampers and resistors.
A damper's flow $f^R$ is the velocity (i.e., the rate of extension or compression), and its effort $e^R$ is the force, which are related as $e^R = -d f^R$ for a linear damper with a damping coefficient $d$.
Unlike a spring, a damper does not store its own energy, so its flow $f^R$ is not on $\gF^S_\vu$ but on $\gF^R_\vu$.
Superscript ${}^R$ indicates that these components are ``resistive.''
However, components that supply energy, such as diodes, can also be classified into this category.
In any case, each component is implemented in PoDiNNs by a neural network that approximates its characteristic $R_\vu:\vf^R\mapsto\ve^R$.

The flow $\vf^I\in\gF^I_\vu$ and effort $\ve^I\in\gE^I_\vu$ represent external inputs, such as an external force (where force is the effort) or a moving boundary (where velocity is the effort).
Their efforts $\ve^I$ depend only on time $t$, not on other components.
Their flows $\vf^I$ are not required for determining the system's dynamics but represent the outcomes of external inputs, such as the reaction force exerted on the moving boundary.
In PoDiNNs, these external inputs are fed into the neural networks.

\subsection{Bivector for Representing Coupled Systems}
For the coordinates $q_i$ and $p_i$ on $\gM$, the basis vectors of the tangent space $\TuM$ are $\pderiv{q_i}$ and $\pderiv{p_i}$, respectively.
The $i$-th basis vectors of $\gF^R_\vu$, $\gF^I_\vu$, $\gE^R_\vu$, and $\gE^I_\vu$ are denoted by $\xi_i^R$, $\xi_i^I$, $\xi^{R*}_i$, and $\xi^{I*}_i$, respectively.
The bivector $B$ assigns to each point $\vu\in\gM$ wedge products of the basis vectors of the flow space $\gF_\vu$, such as $\pderiv{p_i}\wedge\pderiv{q_j}$, $\pderiv{p_i}\wedge\xi_j^R$, and $\pderiv{p_i}\wedge\xi_j^I$, thereby defining the coupling patterns among components.
For example, $\pderiv{p_i}\wedge\xi_j^I$ couples the $j$-th external input with the $i$-th mass $m_i$ with the state $p_i$.
The effort of the external flow is expressed as $e^I_j\xi_j^{I*}$ when the basis is explicit.
This is fed to the bivector $\pderiv{p_i}\wedge\xi_j^I=-\xi_j^I\wedge\pderiv{p_i}$, resulting in $-e^I_j\pderiv{p_i}$, which forms part of the flow $f^S_i\pderiv{p_i}$ for mass $m_i$.
Therefore, we can make the following remarks.

\begin{rmk}[Coupling as Non-zero Elements of Bivector]\label{rmk:coupling}
  Coupling between two components is represented by a non-zero bivector element, which links the effort of one to the flow of the other.
  Thus, by learning the bivector $B$ from the observations of the target system, we can identify the coupling patterns between the system's components.
\end{rmk}

\begin{rmk}[Degeneracy of Dynamics as Degeneracy of Bundle Map]\label{rmk:degenerate}
  Constraints between components that cause degenerate dynamics are reflected in degeneracy of the bundle map $B^\sharp$.
  Thus, by learning the bivector $B$ from the observations of the target system and examining how it degenerates, we can identify the constraints imposed on the system.
\end{rmk}

\begin{rmk}[Coordinate Transformation by Bivector]\label{rmk:coordinate}
  The bivector $B$ defines the coordinate system, which allows PoDiNNs to learn system dynamics regardless of the coordinate system used for the observations.
\end{rmk}

\begin{rmk}[Multiphysics]\label{rmk:multiphysics}
  Our formulation can represent systems across various domains, as summarized in Table~\ref{tab:domains}.
\end{rmk}

These remarks are fundamental in system identification and can aid in reverse engineering, as the coupling patterns of circuit elements serve as representations of the circuit diagrams.
Also, PoDiNNs are the first neural-network method to handle multiple domains of dynamical systems and their interactions by leveraging the Dirac structure.
See Appendix~\ref{appendix:bivector} for concrete examples.

\subsection{Discussions for Comparisons and Limitations}
As discussed above, PoDiNNs are the first model to cover degenerate dynamics, dissipation, external inputs, and coordinate transformations in a unified manner.

Previous models, including HNNs, LNNs, NSFs, PNNs, and Dissipative SymODENs, approximate the Hamiltonian $H$, Lagrangian $L$, or dissipative term $D$ using neural networks, which implicitly learn the relationships between variables.
However, these relationships are difficult to extract due to the implicit nature of the learning and the high nonlinearity of the networks.
In contrast, PoDiNNs separate the coupling patterns $B$ from the energy functions $H$, improving interpretability and generalization performance.

In an absolute coordinate system, the displacements of springs were computed from the absolute positions of springs' edges internally within the energy function.
Although PoDiNNs can handle such system, the bivector for energy-storing components takes the standard form $B = \sum_i \pderiv{p_i} \wedge \pderiv{q_i}$, which hinders the identification of their coupling patterns.
PoDiNNs are still able to identify coupling patterns involving energy-dissipating components and external inputs.
To identify the coupling patterns between energy-storing components, it is necessary to employ a relative coordinate system based on displacements, rather than absolute positions.

LNNs and NSFs are designed for non-canonical Hamiltonian systems, where the Hamiltonian vector field $X_H$ is implicitly defined as $\Omega^\flat(X_H) = \dee H$~\citep{Cranmer2020,Chen2021NeurIPS}.
This approach requires matrix inversion to compute $X_H$, which is computationally expensive, numerically unstable, and not directly applicable to degenerate dynamics.
In contrast, PoDiNNs explicitly define $X_H$ as part of $\vf = B^\sharp(\ve)$, offering faster and more stable computations that also handle degenerate dynamics.

In mechanical systems, energy-dissipating components such as dampers and friction are characterized by first setting the velocities set, from which the corresponding forces are then derived.
As a result, flow is always defined as velocity, and effort as force.
In the electric circuits, however, the flow for resistors and diodes can be either current or voltage, depending on their coupling with other components.
In practice, it is advisable to include an abundance of both types of components.
Any excess components will either be ignored or exhibit redundant characteristics, as demonstrated in the experiments.

The dynamics of electric circuits are generally described by differential-algebraic equations (DAEs).
For example, when a capacitor is connected to a direct voltage source in parallel, infinite current instantaneously flows into the capacitor, and its voltage matches that of the direct voltage source.
This behavior cannot be represented by ODEs alone.
While PoDiNNs cannot fully represent such systems, they can still describe a wide range of systems and expand the scope of modeling unknown dynamical systems from observations.

\section{Experiments and Results}
\subsection{Experimental Settings}\label{sec:setting}
\begin{figure}[t]
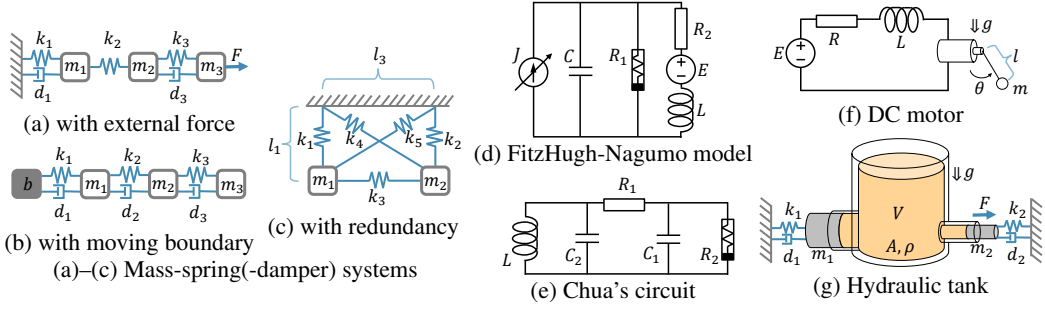

  \footnotesize
  \centering
  \tabcolsep=0.2mm
  \begin{tabular}{ccc}
    \begin{tabular}{cc}
      \begin{tabular}{c}
        \includegraphics[scale=0.38,page=2]{fig/figs.pdf} \\[-0.5mm]
        (a) with external force                           \\
        \includegraphics[scale=0.38,page=3]{fig/figs.pdf} \\[-0.5mm]
        (b) with moving boundary                          \\
      \end{tabular} &
      \begin{tabular}{c}
        \includegraphics[scale=0.38,page=4]{fig/figs.pdf} \\[-0.5mm]
        (c) with redundancy
      \end{tabular}   \\
      \multicolumn{2}{c}{(a)--(c) Mass-spring(-damper) systems}
    \end{tabular} &
    \begin{tabular}{c}
      \includegraphics[scale=0.38,page=5]{fig/figs.pdf} \\[-0.5mm]
      (d) FitzHugh-Nagumo model                         \\
      \includegraphics[scale=0.38,page=6]{fig/figs.pdf} \\[-0.5mm]
      (e) Chua's circuit
    \end{tabular} &
    \begin{tabular}{c}
      \includegraphics[scale=0.38,page=7]{fig/figs.pdf} \\[-1.0mm]
      (f) DC motor                                      \\[1mm]
      \includegraphics[scale=0.38,page=8]{fig/figs.pdf} \\[-0.5mm]
      (g) Hydraulic tank
    \end{tabular}
  \end{tabular}
  \vspace*{-2mm}
  \caption{Diagrams of systems that provide datasets. Detailed definitions are found in Appendix~\ref{appendix:datasets}.}
  \label{fig:datasets}
\end{figure}

\textbf{Datasets}\quad
We evaluated PoDiNNs and related methods to demonstrate their modeling performance using seven simulation datasets, as shown in Fig.~\ref{fig:datasets}.
Due to page limitations, we briefly overview their characteristics.
The full explanations can be found in Appendix~\ref{appendix:datasets}.

For the mechanical domain, we prepared three mass-spring(-damper) systems (a)--(c), with the velocities of the masses as observations.
In the absolute coordinate system, we used the absolute positions of the springs' ends as the springs' states, and in the relative coordinate system, their displacements.
As external inputs, an external force $F$ is applied to system (a), and a moving boundary $b$ is coupled with system (b).
Dissipative SymODENs cannot directly account for the latter.
System (c) has redundant observations; while springs are coupled only with two masses, the displacements of all five springs were provided as observations.
CHNNs cannot model this redundancy, as it does not arise from holonomic constraints, but PNNs can.
In all three systems, the springs and dampers exhibit nonlinear characteristics.

We selected two nonlinear electric circuits, (d) the FitzHugh-Nagumo model and (e) Chua's circuit, from the electro-magnetic domain~\citep{Izhikevich2006e,Chua2007}.
The capacitor voltage and inductor current were used as the observations.

We also used two multiphysics systems (f) and (g).
In system (f), a DC motor bridges an electric circuit in the electro-magnetic domain and a pendulum in the rotational domain.
In system (g), a hydraulic tank in the hydraulic domain is connected to two cylinders with pistons in the mechanical domain, each of which is also connected to a fixed wall via a spring and damper.
An external force is applied to the smaller piston, which moves the larger piston through the fluid in the tank.

\textbf{Implementation Details}\quad
We implemented all experimental code from scratch using Python v3.11.9, along with numpy v1.26.4, scipy v1.12.1, pytorch v2.3.1, and desolver v4.1.1~\citep{Paszke2017}.
See also Appendix~\ref{appendix:datasets} for more details.

We compared Neural ODEs, Dissipative SymODENs, PNNs, and PoDiNNs.
Neural ODEs were evaluated on all datasets.
Dissipative SymODENs were tested on systems (a) and (b), while PNNs were evaluated on system (c), in the absolute coordinate system.
Other combinations were out of scope of the original studies.
For PNNs, we used HNNs in place of SympNets for a fair comparison.
For Dissipative SymODENs and PoDiNNs, we assumed that the kinetic energy in the mechanical domain could be expressed as $\frac{1}{2m}p^2$, where $m$ is the mass and $p$ is the momentum.
Therefore, we employed this form with a learnable parameter $m$, rather than a neural network.
The same approach was applied to capacitors and inductors for PoDiNNs.
All other components were assumed to be nonlinear and were modeled using neural networks.
In the absolute coordinate system, the potential energy was modeled for all configurations $q_1, \dots, q_n$ plus the position $q_b$ of the moving boundary collectively by a single neural network.
In the relative coordinate system or other domains, the potential energy was modeled separately for each component.
We assumed that the number of energy-dissipating components and the nature of their flows (e.g., current or voltage in electric circuits) are known, and also examined the impact of inaccurate assumptions.

Each model was trained using one-step predictions on the training subset.
Specifically, given a random state snapshot $\vu^{(n)}$ at $n$-th step, each model predicted the next state $\tilde{\vu}^{(n+1)}$ after a time step $\Delta t$.
Then, all parameters were updated to minimize the squared error between the predicted state $\tilde{\vu}^{(n+1)}$ and the ground truth $\vu^{(n+1)}$, normalized by state standard deviations.
It is known that longer prediction steps can improve robustness against noise~\citep{Chen2020a}.
However, as we aimed to purely compare the representational performance of models, no noise was added to the datasets.
We confirmed that longer prediction steps only led to performance degradation.

\textbf{Evaluation Metrics}\quad
We evaluated models using the accuracy of the solution to the initial value problem on the test subset.
Starting from the initial value of each trajectory, each model predicted the entire trajectory of $N$ steps and calculated the mean squared error (MSE) between the predicted state $\tilde\vu^{(n)}$ and the ground truth $\vu^{(n)}$ at each step indexed by $n$.
The mean of these MSEs across all trajectories and all time steps was computed as the evaluation metric, referred to as the overall MSE;
\begin{equation}
  \textstyle MSE(\tilde\vu;\vu)=\frac{1}{N}\sum_{n=1}^N[\mathrm{MSE}(\tilde\vu^{(n)},\vu^{(n)})].
\end{equation}
Lower values indicate better performance.
Additionally, we defined the valid prediction time (VPT) as the ratio of the number of steps taken before the MSE first exceeds a certain threshold $\theta$ to the total length $N$ of the test trajectory~\citep{Botev2021,Jin2020,Vlachas2020};
\begin{equation}
  \textstyle VPT(\tilde\vu;\vu)=\frac{1}{N}\arg \max_{n_f} \{n_f | \mathrm{MSE}(\tilde\vu^{(n)},\vu^{(n)})<\theta\mbox{ for all }n\le n_f\}.
\end{equation}
Higher values indicate better performance.
The threshold $\theta$ was set to ensure that most models achieved VPTs between 0.1 and 0.9.
For each dataset, models were trained and evaluated from scratch over 10 trials per dataset.

\begin{table}[t]
  \centering
  \scriptsize
  \tabcolsep=.45mm
  \caption{Experimental Results.}
  \label{tab:performance}
  \begin{tabular}{lcccccccccc}
    \toprule
                        & \multicolumn{8}{c}{\textbf{Mass-Spring-Damper Systems}}                                                                                                                                                                                                                                                                                                                                                                           \\
    \cmidrule(lr){2-11}
    \textbf{Dataset}    & \multicolumn{4}{c}{\textbf{with external force}}        & \multicolumn{4}{c}{\textbf{with moving boundary}} & \multicolumn{2}{c}{\textbf{(c) with redundancy}}                                                                                                                                                                                                                                                                    \\
    \cmidrule(lr){2-5}\cmidrule(lr){6-9}\cmidrule(lr){10-11}
    \textbf{Coordinate} & \multicolumn{2}{c}{\textbf{(a) relative}}               & \multicolumn{2}{c}{\textbf{(a') absolute}}        & \multicolumn{2}{c}{\textbf{(b) relative}}        & \multicolumn{2}{c}{\textbf{(b') absolute}} & \multicolumn{2}{c}{\textbf{relative}}                                                                                                                                                                               \\
    \cmidrule(lr){2-3}\cmidrule(lr){4-5}\cmidrule(lr){6-7}\cmidrule(lr){8-9}\cmidrule(lr){10-11}
    \textbf{Model}      & \textbf{MSE}$\downarrow$                                & \textbf{VPT}$\uparrow$                            & \textbf{MSE}$\downarrow$                         & \textbf{VPT}$\uparrow$                     & \textbf{MSE}$\downarrow$              & \textbf{VPT}$\uparrow$             & \textbf{MSE}$\downarrow$     & \textbf{VPT}$\uparrow$       & \textbf{MSE}$\downarrow$                  & \textbf{VPT}$\uparrow$       \\
    \midrule
    Neural ODEs         & 4.90\std{0.27}                                          & 0.128\std{0.021}                                  & 7.68\std{1.07}                                   & 0.097\std{0.008}                           & 7.43\std{1.19}                        & 0.153\std{0.039}                   & 5.02\std{0.56}               & 0.135\std{0.052}             & 2490.61\std{1847.24}                      & 0.099\std{0.004}             \\
    HNN Variants$^*$    & \multicolumn{2}{c}{\textcolor{gray}{---}}               & 8.31\std{0.56}                                    & 0.104\std{0.017}                                 & \multicolumn{2}{c}{\textcolor{gray}{---}}  & \textcolor{gray}{5.92\std{0.12}}      & \textcolor{gray}{0.001\std{0.000}} & 634.22\std{300.01}           & 0.000\std{0.000}                                                                                        \\
    \midrule
    PoDiNNs             & \textbf{4.33}\std{0.26}                                 & \textbf{0.622}\std{0.002}                         & \textbf{7.02}\std{0.49}                          & \textbf{0.437}\std{0.053}                  & \textbf{0.26}\std{1.12}               & \textbf{0.856}\std{0.015}          & \textbf{3.74}\std{0.84}      & \textbf{0.581}\std{0.040}    & \zz\zz\zz\textbf{0.11}\std{0.02\zz\zz\zz} & \textbf{0.863}\std{0.017}    \\
    \bottomrule
    \\[-2.5mm]
                        & \verysmall{$\times 10^{-1}$}                            & \verysmall{$\theta=10^{-3}$}                      & \verysmall{$\times 10^{-1}$}                     & \verysmall{$\theta=10^{-3}$}               & \verysmall{$\times 10^{-2}$}          & \verysmall{$\theta=10^{-4}$}       & \verysmall{$\times 10^{-2}$} & \verysmall{$\theta=10^{-4}$} & \verysmall{$\times 10^{-1}$}              & \verysmall{$\theta=10^{-3}$} \\
  \end{tabular}\\[0.5mm]
  \begin{tabular}{lcccccccc}
    \toprule
                   & \multicolumn{4}{c}{\textbf{Electric Circuits}}   & \multicolumn{4}{c}{\textbf{Multiphysics}}                                                                                                                                                                                                                           \\
    \cmidrule(lr){2-5}\cmidrule(lr){6-9}
                   & \multicolumn{2}{c}{\textbf{(d) FitzHugh-Nagumo}} & \multicolumn{2}{c}{\textbf{(e) Chua's}}   & \multicolumn{2}{c}{\textbf{(f) DC Motor}} & \multicolumn{2}{c}{\textbf{(g) Hydraulic Tank}}                                                                                                                             \\
    \cmidrule(lr){2-3}\cmidrule(lr){4-5}\cmidrule(lr){6-7}\cmidrule(lr){8-9}
    \textbf{Model} & \textbf{MSE}$\downarrow$                         & \textbf{VPT}$\uparrow$                    & \textbf{MSE}$\downarrow$                  & \textbf{VPT}$\uparrow$                          & \textbf{MSE}$\downarrow$     & \textbf{VPT}$\uparrow$       & \textbf{MSE}$\downarrow$     & \textbf{VPT}$\uparrow$       \\
    \midrule
    Neural ODEs    & 48.96\std{17.43}                                 & 0.322\std{0.041}                          & 14.74\std{1.33}                           & 0.287\std{0.016}                                & 16.03\std{5.15}              & 0.276\std{0.168}             & 30.62\std{8.22}              & 0.045\std{0.010}             \\
    \midrule
    PoDiNNs        & \textbf{\zz1.64}\std{1.37\zz}                    & \textbf{0.649}\std{0.072}                 & \textbf{\zz9.21}\std{0.83}                & \textbf{0.469}\std{0.010}                       & \textbf{\zz2.11}\std{2.90}   & \textbf{0.923}\std{0.013}    & \textbf{\zz5.42}\std{3.65}   & \textbf{0.918}\std{0.013}    \\
    \bottomrule
    \\[-2.5mm]
                   & \verysmall{$\times 10^{-4}$}                     & \verysmall{$\theta=10^{-3}$}              & \verysmall{$\times 10^{-1}$}              & \verysmall{$\theta=10^{-3}$}                    & \verysmall{$\times 10^{-3}$} & \verysmall{$\theta=10^{-4}$} & \verysmall{$\times 10^{-2}$} & \verysmall{$\theta=10^{-4}$} \\
  \end{tabular}\\
  \raggedright
  \scriptsize
  Each score represents the median over 10 trials, followed by the $\pm$ symbol and the quartile deviation.
  ${}^*$Dissipative SymODENs for systems (a) and (b), and PNNs for system (c).
  \vspace*{-3mm}
\end{table}

\begin{figure}[t]
  \centering
  \scriptsize
  \tabcolsep=0mm
  \begin{tabular}{cccc}
    \includegraphics[scale=0.76]{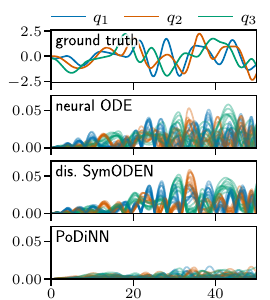}  &
    \includegraphics[scale=0.76]{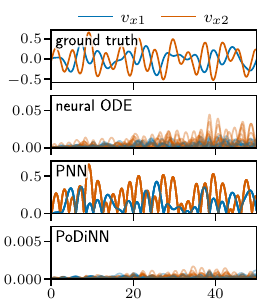} &
    \includegraphics[scale=0.76]{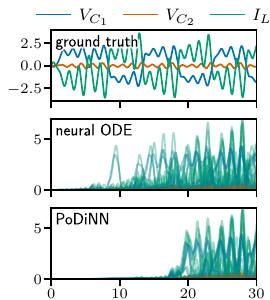}   &
    \includegraphics[scale=0.76]{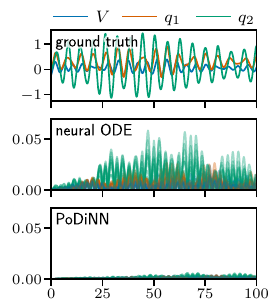}     \\
    (a') with external force                          &
    (c) with redundancy                               &
    (e) Chua's circuit                                &
    (g) Hydraulic tank
    \\
    in absolute coordinate system                     &
  \end{tabular}
  \vspace*{-2mm}
  \caption{Visualizations of example results.
    Each top panel shows ground truth trajectories, while the other panels show the absolute errors of all 10 trials in semi-transparent color.
    See also Fig.~\ref{fig:results2}.}
  \label{fig:results}
\end{figure}


\subsection{Results}\label{sec:results}
\textbf{Numerical Performance}\quad
The results, summarized in Table~\ref{tab:performance}, show that PoDiNNs consistently outperformed all other models across all datasets and metrics.
VPTs show that PoDiNNs provided stable predictions for durations 1.5 to 20 times longer.
These differences were statistically significant, with p-values less than 0.0005, as evaluated by the Mann-Whitney U test.
In some cases, the difference in MSE is small, while that in VPT is substantial.
This happens when a model ignores oscillations in the trajectory and outputs an average path, which keeps the MSE low but results in poor VPT.
This suggests that VPT is a more reliable metric than MSE~\citep{Botev2021,Jin2020,Vlachas2020}.
We also show example trajectories and absolute errors in Figs.~\ref{fig:results} and \ref{fig:results2}.

PoDiNNs perform well both in the absolute and relative coordinate systems, showing their adaptability to different coordinate systems.
Even in the absolute coordinate system, PoDiNNs decompose dissipations and external inputs into coupling patterns and individual characteristics, providing a more effective inductive bias, whereas Dissipative SymODENs treat dissipations and external inputs using black-box functions $D$ and $G$.

PoDiNNs demonstrate excellent accuracy in system (c) because they successfully simplify the dynamics by correctly identifying the degeneracy from high-dimensional observations.
In contrast, PNNs struggled to learn the appropriate coordinate transformation.
Although the Darboux-Lie theorem guarantees the existence of such a transformation, it does not imply that learning it is straightforward.
Also, Neural ODEs lack guarantees for energy conservation, resulting in diverging trajectories.

\textbf{Identifying Coupling Patterns and Component Characteristics (e)}\quad
We examined how the bivector $B$ identifies coupling patterns in system (e), Chua's circuit.
When two coefficients differed by a factor of 1000 or more, we considered the larger one as a detected coupling and the smaller one as effectively zero, indicating no coupling.
In all 10 trials, we obtained the bivector $B=-a\pderiv{\psi}\wedge\pderiv{Q_2}+b\xi^R_1\wedge\pderiv{Q_2}-c_1\xi^R_1\wedge\pderiv{Q_1}+c_2\xi^R_2\wedge\pderiv{Q_1}$, with trial-wise positive parameters $a$, $b$, $c_1$, and $c_2$.
We emphasize that the coefficients of other bivector elements, such as $\pderiv{\psi}\wedge\pderiv{Q_1}$ and $\xi^R_2\wedge\pderiv{Q_2}$, were effectively zero.
Because $R_1$ and $R_2$ cannot be directly observed and their indices are interchangeable, they were appropriately reordered for analysis.

We normalized the coefficients of the bivector elements and the characteristics of system components, as their scales cancel each other out.
Then, from the learned bivector $B$, we can derive Kirchhoff's current laws, $I_{C_1}=-I_{R_1}+I_{R_2}$ and $I_{C_2}=-I_L+I_{R_1}$, and Kirchhoff's voltage laws, $V_L=V_{C_2}$, $V_{R_1}=V_{C_1}-V_{C_2}$, and $V_{R_2}=-V_{C_1}$.
This allows us to construct a circuit diagram, which perfectly matches that of Chua's circuit, shown in Fig.~\ref{fig:datasets} (e).
\begin{wrapfigure}{r}{0.28\textwidth}
  \centering
  \scriptsize
  \vspace*{-3mm}
  \includegraphics[scale=1]{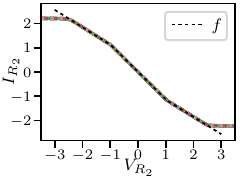}
  \vspace*{-6mm}
  \caption{The identified characteristics of $R_2$.}
  \label{fig:chua_coupling}
  \vspace*{-7mm}
\end{wrapfigure}
While coupling patterns in more complex circuits may not be unique due to Norton's and Thevenin's theorems, our formulation can learn one valid realization.
Figure~\ref{fig:chua_coupling} illustrates the input-output relationship of the neural network representing $R_2$.
The results from all 10 trials, shown as colored dashed lines, almost perfectly overlap the true relationship, shown as a black dashed line, within the range of $\pm 2.5$.
This demonstrates that PoDiNNs accurately identified the component characteristics, even though its response was never observed directly.
The region beyond this range was not included in the training subset, so it is expected that PoDiNNs did not learn the relationship there.

\textbf{Identifying Coupling Patterns of System (g)}\quad
We also examined the bivector $B$ identified in system (g), hydraulic tank.
In all 10 trials, we consistently obtained the bivector $B=\pderiv{p_1}\wedge\pderiv{V}-0.3(\pderiv{p_2}\wedge\pderiv{V})+\pderiv{p_1}\wedge\pderiv{q_1}+\pderiv{p_2}\wedge\pderiv{q_2}$ between the energy-storing components, with the coefficients $1.0$ or $-0.3$, accurate to five significant figures.
The coefficients of the first two terms, $1.0$ and $-0.3$, correspond to the cross-sectional areas $a_1$ and $a_2$ of two cylinders attached at the bottom of the tank, with the negative sign indicating that the flow directions are opposite.
The remaining two terms represent the couplings between the masses and springs.
All other couplings---between the dampers and masses, $\pderiv{p_1}\wedge\pderiv{d_1}$ and $\pderiv{p_2}\wedge\pderiv{d_2}$, and between the external force and the mass, $\pderiv{p_2}\wedge\pderiv{\xi^S}$---were also identified with non-zero coefficients, even though the overall scale is indeterminate due to the cancellation of gravitational acceleration, fluid density, and masses.

\begin{wrapfigure}{r}{0.28\textwidth}
  \vspace*{-4mm}
  \centering
  \scriptsize
  \tabcolsep=.5mm
  \captionof{table}{Impact of \# Components and VPT.}
  \label{tab:nd}
  \begin{tabular}{lcc}
    \toprule
    \textbf{PoDiNNs} & \textbf{Training}            & \textbf{Test}                \\
    \midrule
    $n_d\!=\!0$      & 0.005\std{0.000}             & 0.000\std{0.000}             \\
    $n_d\!=\!1$      & 0.009\std{0.002}             & 0.001\std{0.000}             \\
    $n_d\!=\!2$      & 0.015\std{0.000}             & 0.001\std{0.000}             \\
    $n_d\!=\!3$      & \textbf{0.925}\std{0.007}    & \textbf{0.581}\std{0.040}    \\
    $n_d\!=\!4$      & \textbf{0.932}\std{0.010}    & \textbf{0.597}\std{0.058}    \\
    $n_d\!=\!5$      & \textbf{0.935}\std{0.006}    & \textbf{0.600}\std{0.035}    \\
    \bottomrule
    \\[-2.5mm]
                     & \verysmall{$\theta=10^{-4}$} & \verysmall{$\theta=10^{-4}$} \\
  \end{tabular}
  \vspace*{-6mm}
\end{wrapfigure}
\textbf{Impact of Number of Hidden Components (b)}\quad
The states of energy-storing components are provided as observations, and external inputs are typically known (or inferred using methods like neural CDE~\citep{Kidger2020}).
PoDiNNs also require specifying the number and type of energy-dissipating components, which are usually unknown.
To assess the impact of the assumed number of components, $n_d$, we tested system (b) in the relative coordinate system (see Table~\ref{tab:nd}).
The correct number of dampers is $n_d = 3$.
When $n_d < 3$, performance was extremely poor, indicating incorrect dynamics due to missing dampers.
When $n_d > 3$, performance was similar to the case for $n_d = 3$.
Redundant dampers provided the extra parameters, which sometimes made optimization easier, but were often ignored by learning identical properties to existing dampers, by adopting a zero damping coefficient, or by having zero coupling strength.
Interestingly, this trend appeared in both the training and test subsets, which suggests that assessing performance on the training subset can help identify the correct number of dampers.
In this way, PoDiNNs offer interpretable insights into the system's internal structure.

See Appendix~\ref{appendix:results} for additional visualizations and analyses.

\section{Conclusion}
In this study, we proposed \emph{Poisson-Dirac Neural Networks} (PoDiNNs), which use a Dirac structure to unify the port-Hamiltonian and Poisson formulations.
PoDiNNs offer a unified framework capable of handling various domains of dynamical systems, identifying internal coupling patterns, learning component-wise characteristics, and effectively modeling multiphysics systems.
Our experiments with mechanical, rotational, electro-magnetic, and hydraulic systems validate these capabilities.
Developing methods to address dynamical systems that are described by DAEs and partial differential equations remains for future research.

\section{Ethics Statement}
This study is purely focused on dynamical systems and modeling, and it is not expected to have any direct negative impact on society or individuals.

\section{Reproducibility Statement}
The environment, datasets, methods, evaluation metrics, and other experimental settings are given in Section~\ref{sec:setting} and Appendix~\ref{appendix:datasets}.
For full reproducibility, it is recommended to run the source code attached as supplementary material.


\clearpage
\newpage
\appendix
\renewcommand\thetable{A\arabic{table}}
\setcounter{table}{0}
\renewcommand\thefigure{A\arabic{figure}}
\setcounter{figure}{0}
\renewcommand\theequation{A\arabic{equation}}
\setcounter{equation}{0}

{\huge Appendix}
\section{Examples of Formulations}\label{appendix:example_of_expression}
Consider the time evolution of a point $\vu$ on a manifold $\gM$.
If a local coordinate on $\gM$ is denoted by $x_i$, the corresponding basis vector of the tangent space $\TuM$ is denoted by $\pderiv{x_i}$.
A vector field $X$ on $\gM$ is expressed as $X=\sum_iX_i\pderiv{x_i}$.
If the curve $\vu(t)$ satisfies $\dot\vu(t)=X_{\vu(t)}$, $X_i$ represents the local rate of change of the state $\vu(t)$ in the $x_i$-direction, i.e., $\dot u_i=X_i$.
For a function $F:\gM\to\sR$, its differential $\dee F$ is given by $\dee F = \sum_i \pderiv[F]{x_i} \dee x_i$.
The differential is a covector field, that is, a collection of points on cotangent spaces $\TcuM$.
The basis vector of the cotangent space $\TcuM$ is $\dee x_i$, and the following relation holds: $\dee x_i\pderiv{x_j}=1$ if $i=j$ and 0 otherwise.

When a function $F$ defines dynamics, a two-tensor field links a vector field $X$ and the covector field $\dee F$.
Such a two-tensor field can be a symplectic form, Poisson bivector, or Riemannian metric.
For example, a link with the negative of a Riemannian metric defines a gradient flow.
A symplectic form and Poisson bivector are defined using the wedge product $\wedge$, which satisfies the skew-symmetry $\dee x_i\wedge\dee x_j=-\dee x_j\wedge\dee x_i$, and the relation $(\dee x_i\wedge\dee x_j)(\pderiv{x_i})=\dee x_j$.
\citet{Marsden1999a} and \citet{Hairer2006} have thoroughly discussed how to describe dynamical systems using these geometric objects.
While theoretical details are left to these textbooks, this section introduces a mass-spring system as a concrete example.

\subsection{Mass-Spring System}\label{appendix:example_of_expression:mass-spring}
Consider a mass-spring system with spring constant $k$ and mass $m$.
We will write its dynamics by several formulations.

\textbf{Canonical Hamiltonian Systems}\quad
In the Darboux coordinates (i.e., on the cotangent bundle $\TcQ$), the generalized coordinate $q$ is the displacement of spring $k$, and the generalized momentum $p$ is obtained as $p=mv$ for the velocity $v$ of mass $m$.
The manifold $\gM$ is a 2-dimensional Euclidean space $\R^2$.
The Hamiltonian $H$ is $H(q,p)=\frac{1}{2m}p^2+\frac{1}{2}kq^2$, and the symplectic form $\Omega$ is standard, i.e., $\Omega=\dee q\wedge\dee p$.

Hamilton's equations state that $(\Omega^\flat)(X_H)=(\dee q\wedge\dee p)(\dot q\pderiv{q}+\dot p\pderiv{p})=\dot q \dee p-\dot p\dee q$ equals $\dee H=\pderiv[H]{q}\dee{q}+\pderiv[H]{p}\dee{p}$, leading to the equations of motion, $\dot q=\pderiv[H]{p}=p/m,\ \dot p=-\pderiv[H]{q}=-kq$.

\textbf{Non-Canonical Hamiltonian Systems}\quad
On the tangent bundle $\gM=\TQ$, the velocity $v$ is used in place of the momentum $p$, and the symplectic form $\Omega$ is the Lagrangian 2-form $\Omega=m(\dee q\wedge \dee v)$.
The Hamiltonian $H$ is $H(q,v)=\frac{1}{2}mv^2+\frac{1}{2}kq^2$.

Hamilton's equations state that $(\Omega^\flat)(X_H)=m(\dee q\wedge\dee v)(\dot q\pderiv{q}+\dot v\pderiv{v})=m\dot q \dee v-m\dot v\dee q$ equals $\dee H=\pderiv[H]{q}\dee{q}+\pderiv[H]{v}\dee{v}$, leading to the equations of motion, $\dot q=\frac{1}{m}\pderiv[H]{v}=v,\ \dot v=-\frac{1}{m}\pderiv[H]{q}=-kq/m$.

\textbf{Poisson Systems}\quad
In the Darboux coordinates, the Poisson bivector $B$ is $B=\pderiv{p}\wedge\pderiv{q}$.
Hamilton's equations state that $X_H=\dot q\pderiv{q}+\dot p\pderiv{p}$ equals $B^\sharp(\dee H)=(\pderiv{p}\wedge\pderiv{q})(\pderiv[H]{q}\dee{q}+\pderiv[H]{p}\dee{p})=\pderiv[H]{p}\pderiv{q}-\pderiv[H]{q}\pderiv{p}$.
The equations of motion are $\dot q=\pderiv[H]{p}=p/m,\ \dot p=-\pderiv[H]{q}=-kq$.

On the tangent bundle $\gM=\TQ$, the Poisson bivector $B$ is $B=\frac{1}{m}(\pderiv{v}\wedge\pderiv{q})$, and the Hamiltonian is $H(q,v)=\frac{1}{2}mv^2+\frac{1}{2}kq^2$.
Hamilton's equations state that $X_H=\dot q\pderiv{q}+\dot v\pderiv{v}$ equals $B^\sharp(\dee H)=\frac{1}{m}(\pderiv{v}\wedge\pderiv{q})(\pderiv[H]{q}\dee{q}+\pderiv[H]{v}\dee{v})=\frac{1}{m}\pderiv[H]{v}\pderiv{q}-\frac{1}{m}\pderiv[H]{q}\pderiv{v}$.
The equations of motion are $\dot q=\frac{1}{m}\pderiv[H]{v}=v,\ \dot v=-\frac{1}{m}\pderiv[H]{q}=-kq/m$.

\subsection{Constrained Mass-Spring System}\label{appendix:example_of_expression:degenerate}
Consider a pair of mass-spring systems, indexed by $i \in \{1,2\}$, and introduce a constraint such that the two masses are coupled and always have the same displacement and velocity; the dynamics is degenerate.
This constraint is expressed as $q_1=q_2$.
The Hamiltonian is given simply by the sum of two systems, $H(\vq,\vp) = \frac{1}{2m_1}p_1^2 + \frac{1}{2m_2}p_2^2 + \frac{1}{2}k_1q_1^2 + \frac{1}{2}k_2q_2^2$.
However, Hamilton's equations $\Omega^\flat(X_H)=\dee H$ with the standard symplectic form $\Omega$ cannot describe the dynamics.

\textbf{Degenerate Systems with Coordinate Transformation}\quad
Define new coordinates $q=q_1=q_2$ and $p=\frac{m_1+m_2}{m_1}p_1=\frac{m_1+m_2}{m_2}p_2$, and consider the submanifold $\tilde\gM$ spanned by $(q,p)$.
The Hamiltonian $H$ is unchanged but rewritten as $H(q,p)=\frac{1}{2(m_1+m_2)}p^2+\frac{1}{2}(k_1+k_2)q^2$.
With the standard symplectic form $\Omega=\dee q\wedge \dee p$ on $\tilde\gM$, the equations of motion are written as $\dot q=\frac{p}{m_1+m_2},\ \dot p=-(k_1+k_2)q$.

\textbf{Degenerate Systems with Constraint Force}\quad
The constraint on the coordinates, $q_1-q_2=0$, is naturally satisfied by the constraint on the velocities, $\dot q_1-\dot q_2=0$, which is written as the constraint on momenta, $p_1/m_1-p_2/m_2=0$.
The constants of motion are $q_1-q_2$ and $p_1/m_1-p_2/m_2$.
A constraint force can be introduced to satisfy these constraints, yielding the same equation as above.
\citet{Finzi2020b} proposed CHNNs by combining this approach with neural networks.
However, the automatic derivation of the constraint on momenta from that on the coordinates is non-trivial, and it remains unclear how to learn constraints from data.

\textbf{Degenerate Systems as Poisson Systems}\quad
The equations of motion on the submanifold $\tilde\gM$, $\dot q=\frac{p}{m_1+m_2},\ \dot p=-(k_1+k_2)q$, can be rewritten in the original coordinate system as $\dot q_1=\frac{p_1+p_2}{m_1+m_2},\ \dot q_2=\frac{p_1+p_2}{m_1+m_2},\ \dot p_1=-\frac{m_1}{m_1+m_2}(k_1q_1+k_2q_2),\ \dot p_2=-\frac{m_2}{m_1+m_2}(k_1q_1+k_2q_2)$.
Even on the original manifold $\gM$, these equations are obtained from Hamilton's equations $X=B^\sharp(\dee H)$ with the Poisson bivector $B=\frac{1}{m_1+m_2}(m_1\pderiv{p_1}+m_2\pderiv{p_2})\wedge(\pderiv{q_1}+\pderiv{q_2})$ on $\gM$, while its bundle map $B^\sharp$ is degenerate.
This fact implies that, by adjusting the Poisson bivector from data, we can learn the Hamiltonian systems with constraints and identify how the dynamics is degenerate.

On the tangent bundle $\gM=\TQ$, the Hamiltonian is $H(q_1,q_2,v_1,v_2)=\frac{1}{2}m_1v_1^2+\frac{1}{2}m_2v_2^2+\frac{1}{2}k_1q_1^2+\frac{1}{2}k_2q_2^2$, and the Poisson bivector $B$ is $B=\frac{1}{m_1+m_2}(\pderiv{v_1}+\pderiv{v_2})\wedge(\pderiv{q_1}+\pderiv{q_2})$.
Then, the equations of motion are $\dot q_1=\frac{m_1v_1+m_2v_2}{m_1+m_2},\ \dot q_2=\frac{m_1v_1+m_2v_2}{m_1+m_2},\ \dot v_1=-\frac{k_1q_1+k_2q_2}{m_1+m_2},\ \dot v_2=-\frac{k_1q_1+k_2q_2}{m_1+m_2}$.

\section{Poisson Systems}\label{appendix:bivector}
\subsection{Mechanical Systems}
Using the mass-spring systems described above, we provide concrete examples of how the bivector represents coupling and constraints in mechanical systems.

\textbf{Coupled Systems as Poisson Systems}\quad
Consider two masses and two springs, indexed by $i \in \{1,2\}$, coupled in sequence from a fixed wall.
Let $q_i$ denote the displacement of $i$-th spring.
The equations of motion are given by $\dot{q}_1 = p_1/m_1$, $\dot{q}_2 = p_2/m_2 - p_1/m_1$, $\dot{p}_1 = -k_1 q_1+ k_2 q_2$, and $\dot{p}_2 = -k_2 q_2$.
The bivector $B$ leading to these equations is $B = \pderiv{p_1}\wedge\pderiv{q_1} - \pderiv{p_1}\wedge\pderiv{q_2} + \pderiv{p_2}\wedge\pderiv{q_2}$ for the Hamiltonian $H(\vq,\vp) = \frac{1}{2m_1}p_1^2 + \frac{1}{2m_2}p_2^2 + \frac{1}{2}k_1q_1^2 + \frac{1}{2}k_2q_2^2$.
These terms indicate the couplings between $p_1$ and $q_1$, $p_1$ and $q_2$, and $p_2$ and $q_2$.
The negative coefficient indicates that the coupling between $p_1$ and $q_2$ is in the opposite direction.

\textbf{Degenerate Systems as Poisson Systems}\quad
As shown in Appendix~\ref{appendix:example_of_expression:degenerate}, degenerate systems can be expressed as Poisson systems.
Recall the case of a pair of mass-spring systems, indexed by $i\in\{1,2\}$, constrained so that the two masses always have the same displacement and velocity.
This system is expressed with the Poisson bivector $B=\frac{1}{m_1+m_2}(m_1\pderiv{p_1}+m_2\pderiv{p_2})\wedge(\pderiv{q_1}+\pderiv{q_2})$, which is degenerate in the sense that its bundle map $B^\sharp$ is degenerate.
This indicates the absence of a corresponding symplectic form.
Conversely, by learning the bivector $B$ and examining how it degenerates, one can identify the constraints imposed on the system.

The coefficient $\frac{m_1}{m_1+m_2}$ for the term $\pderiv{p_1}\wedge\pderiv{q_1}$ indicates the coupling strength between mass $p_1$ and spring $q_1$.
The effort $e^S_{q_1} = \pderiv[H]{q_1}$ from spring $q_1$ is distributed to the masses such that $\frac{m_1}{m_1 + m_2}e_{q_1}$ goes to mass $m_1$ and $\frac{m_2}{m_1 + m_2}e_{q_1} $ goes to mass $m_2$.

\textbf{Coordinate Transformation by Bivector}\quad
As shown in Appendix~\ref{appendix:example_of_expression:degenerate}, the elements of the Poisson bivector $B$ in a Poisson system depend on whether the dynamics are defined on the tangent bundle $\TQ$ (using generalized velocities as part of the state) or the cotangent bundle $\TcQ$ (using generalized momenta as part of the state).
Despite this difference, both coordinate systems are represented by Poisson bivectors.
Thus, by learning the bivector $B$ to approximate the dynamics of a given system, one can implicitly learn the coordinate system employed by that system.

\subsection{Electric Circuits}\label{appendix:electric_circuit}
Our formulation is applicable to electric circuits, as summarized in Table~\ref{tab:domains}.
Let $I_X$ and $V_X$ denote the current through and voltage across a circuit element $X$, respectively.

A capacitor with capacitance $C$ has the electric charge $Q$ as its state, and stores the energy $H_C=\frac{1}{2C} Q^2$.
Its flow is the current $I_C$, which leads to the change in the electric charge $Q$ as $\dot Q=I_C$.
Its effort is the voltage $V_C$ because the stored electric charge $Q$ generates the voltage $V_C$ as $V_C=\frac{Q}{C}=\pderiv[H_C]{Q}$.

An inductor with inductance $L$ has the magnetic flux $\varphi$ as its state, and stores the energy $H_L=\frac{1}{2L} \varphi^2$.
Its flow is the voltage $V_L$, which leads to the change in the magnetic flux $\varphi$ as $\dot\varphi=V_L$.
Its effort is the current $I_L$ because the magnetic flux $\varphi$ generates the current $I_L$ as $I_L=\frac{\varphi}{L}=\pderiv[H_L]{\varphi}$.

Consider a system composed of an inductor $L$ and a capacitor $C$ coupled in series.
The state space $\gM$ is the space of the magnetic flux $\varphi$ and electric charge $Q$.
The total energy is $H=H_L+H_C$, and its differential is $\dee H=\frac{\varphi}{L}\dee\varphi+\frac{Q}{C}\dee Q$.
Define a bivector $B=\pderiv{\varphi}\wedge\pderiv{Q}$, which leads to
\begin{equation*}
  B^\sharp(\dee H)=\frac{\varphi}{L}\pderiv{Q}-\frac{Q}{C}\pderiv{\varphi}.
\end{equation*}
The vector field on $\gM$ is $X=\dot\varphi\pderiv{\varphi}+\dot Q\pderiv{Q}$.
Hamilton's equations $X=B^\sharp(\dee H)$ lead to the equations of motion:
\begin{equation*}
  V_L=\dot\varphi=-\frac{Q}{C}=-V_C\text{ and } I_C=\dot Q=\frac{\varphi}{L}=I_L.
\end{equation*}

However, electrical circuits that can be described as Poisson systems are limited to energy-conservative LC circuits.
Our formulation extends this to include resistors, diodes, voltage sources, and current sources.

\section{Dirac Structure}
\label{appendix:hamilton-dirac}
\subsection{Dirac Structure on a Vector Space}
\begin{proof}[Proof of Theorem~\ref{thm:construction_Dirac}\citep{Yoshimura2006}]
  By Definition~\ref{dfn:dirac},
  \begin{equation*}
    D_V^\perp = \{(\vw, \vbeta) \in V \times V^* \mid \langle \valpha, \vw \rangle + \langle \vbeta, \vv \rangle = 0 \text{ for all } \vv \in \Delta\text{ and } \valpha \in \Delta^\circ \}.
  \end{equation*}
  Let $(\bar\vv, \bar\valpha) \in D_V$.
  Then, $\bar\vv\in\Delta$ and $\bar\valpha\in\Delta^\circ$, so $\langle \valpha, \bar\vv \rangle + \langle \bar\valpha, \vv \rangle = 0$ for all $(\vv, \valpha) \in D_V$.
  This implies $(\bar\vv, \bar\valpha)\in D_V^\perp$
  Thus, $D_V \subset D_V^\perp$.

  Let $(\vw, \vbeta)\in D_V^\perp$.
  In the above definition of $D_V^\perp$, setting $\valpha=0$ gives $\langle \vbeta, \vv \rangle = 0$ for all $\vv \in \Delta$.
  Hence, $\vbeta \in \Delta^\circ$.
  Similarly, setting $\vv = \vzero$ gives $\langle \valpha, \vw \rangle = 0$ for all $\valpha \in \Delta^\circ$, which implies $\vw \in \Delta$.
  Hence, $(\vw, \vbeta)\in D_V$.
  Thus, $D_V^\perp \subset D_V$.

  Therefore, $D_V=D_V^\perp$.
\end{proof}

\subsection{Dirac Structure on a Manifold}
\begin{dfn}[\citet{Courant1990,Yoshimura2006}]\label{dfn:dirac_on_manifold}
  A distribution $\Delta$ on a manifold $\gM$ is a collection of vector subspaces $\Delta_\vu$ of tangent spaces $\TuM$, each assigned smoothly to $\gM$ at point $\vu$, forming a vector subbundle of $\TM$.
  The annihilator $\Delta^\circ$ of $\Delta$ is a collection of the annihilator $\Delta^\circ_\vu$ of $\Delta_\vu$, also forming a subbundle of $\TcM$.
  Then, a Dirac structure $D$ is constructed as $D=\Delta\oplus\Delta^\circ$, which is a subbundle of the Pontryagin bundle $\TM\oplus\TcM$.
\end{dfn}

A type of Dirac structure on a manifold $\gM$ can be defined using a symplectic form $\Omega$ on $\gM$.
\begin{thm}[\citet{Yoshimura2006}]\label{thm:induced_dirac}
  Let $\Delta_\gM$ be a distribution on a manifold $\gM$.
  Let $\Omega$ be a symplectic form on a manifold $\gM$.
  $\Omega$ is restricted to $\Delta_\gM$ and denoted by $\Omega_{\Delta_\gM}$.
  Then, the collection of
  \begin{equation*}
    \begin{aligned}
      (D_{\Delta_\gM})_\vu=\{(\vv,\valpha)\in\TuM\times\TcuM\mid & \vv\in(\Delta_\gM)_\vu,
      \\
                                                                 & \text{ and }\langle\valpha,\vw\rangle = (\Omega_{\Delta_\gM})_\vu(\vv,\vw)\text{ for all } \vw\in (\Delta_\gM)_\vu\}
    \end{aligned}
  \end{equation*}
  is a Dirac structure $D_{\Delta_\gM}\subset\TM\oplus\TcM$ on $\gM$, which is said to be induced by $\Omega$.
\end{thm}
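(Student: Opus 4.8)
The plan is to establish the identity $D_{\Delta_\gM}=D_{\Delta_\gM}^\perp$ fiberwise, at an arbitrary point $\vu\in\gM$, and then to observe that smoothness of $\Omega$ and of the subbundle $\Delta_\gM$ makes the pointwise construction vary smoothly, so that $D_{\Delta_\gM}$ is a genuine subbundle of $\TM\oplus\TcM$. Fixing $\vu$ and suppressing subscripts, write $V=\TuM$, $\Delta=(\Delta_\gM)_\vu$, $n=\dim V$, and $k=\dim\Delta$. The key structural fact I would use is that the symmetric pairing $\llangle\cdot,\cdot\rrangle$ is non-degenerate on $V\oplus V^*$: varying each slot separately shows that an element pairing to zero with everything must vanish. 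Consequently $\dim W+\dim W^\perp=2n$ for every subspace $W$, so it suffices to prove the two claims (i) $D\subseteq D^\perp$ and (ii) $\dim D=n$; together these force $\dim D^\perp=2n-n=n=\dim D$ and hence $D=D^\perp$.

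For claim (i), I would take $(\vv,\valpha),(\bar{\vv},\bar{\valpha})\in D$ and expand $\llangle(\vv,\valpha),(\bar{\vv},\bar{\valpha})\rrangle=\langle\valpha,\bar{\vv}\rangle+\langle\bar{\valpha},\vv\rangle$. Because $\bar{\vv}\in\Delta$, the defining relation for $(\vv,\valpha)$ gives $\langle\valpha,\bar{\vv}\rangle=\Omega(\vv,\bar{\vv})$, and symmetrically $\langle\bar{\valpha},\vv\rangle=\Omega(\bar{\vv},\vv)$ since $\vv\in\Delta$. The two terms cancel by skew-symmetry of $\Omega$, so $D$ is isotropic.

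For claim (ii), I would rewrite the defining condition using the restriction map $\iota^*:V^*\to\Delta^*$ dual to the inclusion $\iota:\Delta\hookrightarrow V$, namely $D=\{(\vv,\valpha)\in\Delta\times V^*\mid \iota^*\valpha=\Omega(\vv,\cdot)|_\Delta\}$, and then analyze the projection $\pi:D\to\Delta$, $(\vv,\valpha)\mapsto\vv$. Its kernel is $\{\vzero\}\times\Delta^\circ$, of dimension $n-k$, while its image is all of $\Delta$: for any $\vv\in\Delta$ the functional $\Omega(\vv,\cdot)|_\Delta$ lies in $\Delta^*$, which is the image of the surjection $\iota^*$, so the constraint on $\valpha$ is solvable. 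Rank--nullity then gives $\dim D=(n-k)+k=n$.

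The genuinely substantive point, and the only place where one might slip, is the surjectivity of $\pi$ inside claim (ii): one must guarantee that for each $\vv\in\Delta$ there exists a covector on all of $V$ restricting to $\Omega(\vv,\cdot)$ on $\Delta$. This reduces to surjectivity of $\iota^*$, which holds because the dual of an injection is a surjection in finite dimensions. I would emphasize that the argument nowhere requires $\Omega|_\Delta$ to be non-degenerate, so the construction legitimately accommodates the degenerate restrictions that motivate it; the only inputs are the skew-symmetry of $\Omega$ and the extendability of functionals from $\Delta$ to $V$.
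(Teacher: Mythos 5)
Your proof is correct, but it completes the argument by a different route than the paper. Both proofs establish the isotropy inclusion $D_\vu\subseteq D_\vu^\perp$ identically, using skew-symmetry of $\Omega$ applied through the defining relation. Where you diverge is the converse: the paper proves $D_\vu^\perp\subseteq D_\vu$ directly, by testing an element $(\vw,\vbeta)\in D_\vu^\perp$ first against the special elements $(\vzero,\valpha)$ with $\valpha\in(\Delta_\gM^\circ)_\vu$ (which forces $\vw\in(\Delta_\gM^\circ)^\circ_\vu=(\Delta_\gM)_\vu$ via the bi-annihilator identity) and then against general elements of $D_\vu$ (which forces $\langle\vbeta,\vv\rangle=(\Omega_{\Delta_\gM})_\vu(\vw,\vv)$ for all $\vv\in(\Delta_\gM)_\vu$). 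You instead invoke the maximal-isotropy characterization: non-degeneracy of $\llangle\cdot,\cdot\rrangle$ gives $\dim W+\dim W^\perp=2n$ for every subspace $W$, and your rank--nullity computation on the projection $\pi:D_\vu\to(\Delta_\gM)_\vu$ (kernel $\{\vzero\}\times(\Delta_\gM^\circ)_\vu$ of dimension $n-k$, image all of $(\Delta_\gM)_\vu$ by extendability of functionals) pins $\dim D_\vu=n$, so isotropy forces equality. Your route is the classical one going back to Courant---isotropic plus dimension $n$ equals Lagrangian---and it is modular: the dimension count doubles as a proof that the defining condition is solvable over every $\vv\in(\Delta_\gM)_\vu$, a fact the paper never needs to make explicit. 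The paper's route is more elementary and self-contained, needing neither the dimension formula for the symmetric pairing nor surjectivity of the restriction map $\iota^*$, though it leans on $(\Delta_\gM^\circ)^\circ=\Delta_\gM$, which is a finite-dimensional fact of the same flavor as the ones you use. You correctly note that neither argument requires non-degeneracy of $\Omega$ restricted to $(\Delta_\gM)_\vu$, and your remark on smoothness of the resulting subbundle (constant fiber dimension $n$ plus smooth dependence on $\vu$) addresses a point the paper's fiberwise proof leaves implicit.
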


\begin{proof}[Proof of Theorem~\ref{thm:induced_dirac}\citep{Yoshimura2006}]
  By definition, the orthogonal of $D_{\Delta_\gM}$ at $\vu\in\gM$ is given by
  \begin{equation*}
    \begin{aligned}
      (D_{\Delta_\gM}^\perp)_\vu = \{(\vw, \vbeta) & \in \TuM \times \TcuM \mid\langle\valpha, \vw\rangle + \langle\vbeta,\vv\rangle = 0 \text{ for all } \vv \in (\Delta_\gM)_\vu, \\
                                                   & \text{ and } \langle\valpha,\vw\rangle = (\Omega_{\Delta_\gM})_\vu(\vv,\vw)\text{ for all }\vw\in (\Delta_\gM)_\vu\}.
    \end{aligned}
  \end{equation*}

  Let $(\vv, \valpha),(\bar{\vv}, \bar{\valpha}) \in (D_{\Delta_\gM})_\vu$,
  $\langle\valpha,\bar{\vv}\rangle + \langle\bar{\valpha},\vv\rangle = \Omega_{\Delta_\gM}(\vv, \bar{\vv}) + \Omega_{\Delta_\gM}(\bar{\vv}, \vv) = 0$.
  The latter equality holds because of the skew-symmetry of $\Omega$.
  This implies $(\vv, \valpha)\in (D_{\Delta_\gM}^\perp)_\vu$.
  Thus, $(D_{\Delta_\gM})_\vu \subset (D_{\Delta_\gM}^\perp)_\vu$.

  Let $(\vw, \vbeta) \in (D_{\Delta_\gM}^\perp)_\vu$.
  Then, $\langle\valpha,\vw\rangle + \langle\vbeta,\vv\rangle = 0$ for all $(\vv, \valpha) \in (D_{\Delta_\gM})_\vu$.
  First, setting $\vv=\vzero$ gives $\langle\valpha,\vw\rangle=(\Omega_{\Delta_\gM})_\vu(\vzero,\vw)=0$ for any $\valpha\in(\Delta_\gM^\circ)_\vu$.
  Thus, $\vw\in({\Delta_\gM^\circ})^\circ_\vu=(\Delta_\gM)_\vu$.
  Second, if $\valpha$ satisfies $\langle\valpha,\vw\rangle=(\Omega_{\Delta_\gM})_\vu(\vv,\vw)$ for any $\vv\in(\Delta_\gM)_\vu$, then $\langle\valpha,\vw\rangle + \langle\vbeta,\vv\rangle = (\Omega_{\Delta_\gM})_\vu(\vv,\vw) + \langle\vbeta,\vv\rangle=0$ for any $\vv\in(\Delta_\gM)_\vu$.
  This implies $\langle\vbeta,\vv\rangle=(\Omega_{\Delta_\gM})_\vu(\vw,\vv)$ for any $\vv\in(\Delta_\gM)_\vu$.
  Because it has been proved that $\vw\in(\Delta_\gM)_\vu$, $(\vw,\vbeta)\in (D_{\Delta_\gM})_\vu$.
  Thus, $(D_{\Delta_\gM}^\perp)_\vu \subset (D_{\Delta_\gM})_\vu$.

  Therefore, $(D_{\Delta_\gM})_\vu=(D_{\Delta_\gM}^\perp)_\vu$.
\end{proof}

\begin{dfn}[Hamilton-Dirac System~\citep{Yoshimura2006a}]\label{dfn:hamilton-dirac}
  Let $\gM$ be a manifold, $H:\gM\to \R$ be an energy function, and $D$ be a Dirac structure on $\gM$.
  Given a vector field $X_H$ on $\gM$, if it holds for each $\vu(t)\in\gM$ and $t \in [a,b]$ that
  \begin{equation*}
    ((X_H)_{\vu(t)}, \dee H_{\vu(t)}) \in (D_{\Delta_\gM})_{\vu(t)},
  \end{equation*}
  the tuple $(H, D_{\Delta_\gM}, X_H)$ is called a Hamilton-Dirac system (or implicit Hamiltonian system).
\end{dfn}

Note that the symplectic form restricted to a distribution $\Delta_\gM$, denoted by $\Omega_{\Delta_\gM}$, satisfies $(\Omega_{\Delta_\gM})_\vu(\vv,\vw)=\Omega_\vu(\vv,\vw)$ for any $\vv,\vw\in(\Delta_\gM)_\vu$.
The symplectic form $\Omega$ is degenerate in the sense that its bundle map $\Omega^\flat$ on the tangent space $\TuM$ is degenerate, but $\Omega_{\Delta_\gM}$ is non-degenerate on the distribution $(\Delta_\gM)_\vu$.
If there is no constraint, $\Delta_\gM=\TM$ and $\Omega_{\Delta_\gM}=\Omega$.
When the Dirac structure $D$ is given as in Theorem~\ref{thm:induced_dirac} with no constraint, the Hamilton-Dirac system is identical to the Hamiltonian system $\Omega^\flat(X_H)=\dee H$ in Eq.~\eqref{eq:hamilton_equation}.
The distribution $\Delta_\gM$ describes constraints on the velocities, and hence constrained Hamiltonian systems can be rewritten as Hamilton-Dirac systems.
Therefore, Hamiltonian-Dirac systems are generalizations of Hamiltonian and constrained Hamiltonian systems.
For example, the system in Appendix~\ref{appendix:example_of_expression:degenerate} can be written as a Hamilton-Dirac system as follows:

\textbf{Degenerate Systems as Hamilton-Dirac Systems}\quad
The distribution $\Delta_\gM$ is given by $(\Delta_\gM)_\vu=\{(\dot q_1,\dot q_2,\dot p_1,\dot p_2)\in\TuM\mid\dot q_1-\dot q_2=0,\ \dot p_1/m_1-\dot p_2/m_2=0\}=\operatorname{span}\{\pderiv{q_1}+\pderiv{q_2},\frac{1}{m_1}\pderiv{p_1}+\frac{1}{m_2}\pderiv{p_2}\}$.
Define the new coordinates $q=q_1=q_2$ and $p=\frac{m_1+m_2}{m_1}p_1=\frac{m_1+m_2}{m_2}p_2$, with $\Delta_\gM=\operatorname{span}\{\pderiv{q},\pderiv{p}\}$.
The symplectic form $\Omega_{\Delta_\gM}$ restricted to the distribution $\Delta_\gM$ should satisfy $(\Omega_{\Delta_\gM})_\vu(\vv,\vw)=\Omega_\vu(\vv,\vw)$ for any $\vv,\vw\in(\Delta_\gM)_\vu$ at $\vu$.
Such form is given as $\Omega_{\Delta_\gM}=\dee q\wedge\dee p$.
Then, the equations of motion are written as $\dot q=\frac{p}{m_1+m_2},\ \dot p=-(k_1+k_2)q$.

\subsection{Port-based Systems}
\begin{proof}[Proof of Theorem~\ref{thm:poisson_dirac}]
  By definition, the orthogonal of $D$ at $\vu\in\gM$ is given by
  \begin{equation*}
    D^\perp_\vu = \{(\bar\vf, \bar\ve) \in \gF_\vu \times \gE_\vu \mid\langle\ve, \bar\vf\rangle + \langle\bar\ve,\vf\rangle = 0 \text{ for all } \ve \in \gE_\vu \text{ and } \vf=B^\sharp_\vu(\ve)\}.
  \end{equation*}

  Let $(\vf, \ve),(\bar{\vf}, \bar{\ve}) \in D_\vu$.
  Then, $\langle\ve,\bar{\vf}\rangle + \langle\bar{\ve},\vf\rangle=\langle\ve,B^\sharp_\vu(\bar\ve)\rangle + \langle\bar{\ve},B^\sharp_\vu(\ve)\rangle = B_\vu(\ve,\bar\ve)+B_\vu(\bar\ve,\ve) = 0$ due to the skew-symmetry of $B$.
  This implies $(\vf, \ve)\in D_\vu^\perp$.
  Thus, $D_\vu \subset D^\perp_\vu$.

  Let $(\bar\vf, \bar\ve) \in D^\perp_\vu$.
  Then, for any $(\vf,\ve)\in D_\vu$ (i.e., for any $\ve \in \gE_\vu$ and $\vf=B^\sharp_\vu(\ve)$), $0=\langle\ve,\bar\vf\rangle + \langle\bar\ve,\vf\rangle=\langle\ve,\bar\vf\rangle + \langle\bar\ve,B^\sharp_\vu(\ve)\rangle = \langle\ve,\bar\vf\rangle + B_\vu(\bar\ve,\ve)= \langle\ve,\bar\vf\rangle - B_\vu(\ve,\bar\ve)= \langle\ve,\bar\vf\rangle - \langle\ve,B^\sharp_\vu(\bar\ve)\rangle$.
  $\langle\ve,\bar\vf\rangle= B_\vu(\ve,\bar\ve)$ for all $\ve \in \gE_\vu$ implies that $\bar\vf=B^\sharp_\vu(\bar\ve)$.
  Thus, $D^\perp_\vu \subset D_\vu$.

  Therefore, $D_\vu=D^\perp_\vu$.
\end{proof}

By extending the above Hamilton-Dirac system from the Pontryagin bundle $\TM \oplus\TcM$ to the vector bundle $\gF \oplus \gE$, we obtain the so-called port-Hamiltonian systems~\citep{Courant1990,VanderSchaft2014}.
However, previous neural network-based methods employed the canonical form in Eq.~\eqref{eq:port-hamiltonian} and have not attempted to identify coupling patterns or component-wise characteristics~\citep{Zhong2020a,Eidnes2023,Neary2023,DiPersio2024}.

In a bond graph representation of a dynamical system, a bond represents a component, its ports (flow and effort) represent points of interaction with other bonds, and the arrows connected to the ports represent the interactions between them.
Both the Port-Hamiltonian and our formulations, as well as their terminology, are based on this bond graph structure~\citep{Duindam2009}.

Instead of the symplectic form $\Omega$, our formulation employs (Poisson) bivector $B$ to define the Dirac structure $D$ on the tangent and cotangent bundles ($\TuM=\gF^S$ and $\TcuM=\gE^S$) plus the vector bundles for port variables ($\gF^R$, $\gF^I$, $\gE^R$, and $\gE^I$).
As is the case with the symplectic form $\Omega$, we can constrain a Poisson bivector $B$ on a codistribution $\Delta^*_\gM$, which is a subbundle of the cotangent bundle $\TcM$, and then define the Dirac structure.
However, because the bivector inherently handles constraints, we did not adopt this formulation.
Thus, our formulation is a special case of the Poisson-Dirac formulation with ports, as summarized in Table~\ref{tab:comparison}.
Note that, for a bivector to be called a Poisson bivector, it must satisfy certain conditions, such as the Jacobi identity~\citep{Courant1990}.
However, the learned bivector in our formulation does not necessarily satisfy these conditions, nor are these conditions required for the proofs presented earlier.
Thus, throughout this paper, we refer to $B$ simply as a bivector, without restricting it to being a Poisson bivector.

\section{Datasets and Experimental Settings}\label{appendix:datasets}
\subsection{Implementation Details}
Following previous studies~\citep{Greydanus2019,Matsubara2020}, we used fully-connected neural networks with two hidden layers to implement any vector fields and energy functions for all models.
Each hidden layer had 200 units, followed by a hyperbolic tangent activation function.
Weight matrices were initialized using PyTorch's default algorithm.

In PoDiNNs, each element of the bivector $B$ related to energy-dissipating components was initialized from a uniform distribution $U(-0.1, 0.1)$, while the remaining elements are set to zero.
These bivector elements were updated along with the neural network parameters.
Elements representing incompatible couplings are constrained to be zero.
For instance, masses cannot couple directly with each other but can couple with springs, dampers, or external forces.

Unless stated otherwise, the time step size was set to $\Delta t=0.1$.
Each training subset consisted of 1,000 trajectories of 1,000 steps, and each test subset consisted of 10 trajectories of 10,000 steps.
The Dormand–Prince method (dopri5) with absolute tolerance $\operatorname{atol}=10^{-7}$ and relative tolerance $\operatorname{rtol}=10^{-9}$ was used to integrate the ground truth ODEs and neural network models~\citep{Dormand1986}.
The Adam optimization algorithm~\citep{Kingma2014b} was applied with parameters $(\beta_1,\beta_2)=(0.9,0.999)$ and a batch size of 100 for updates.
The learning rate was initialized at $10^{-3}$ and decayed to zero using cosine annealing~\citep{Loshchilov2017}.
The number of training iterations was set to 100,000.

All experiments were conducted on a single NVIDIA A100 GPU.

\subsection{Mechanical Systems}
\textbf{Overview and Experimental Setting}\quad
A spring $k$ generates a force that restores it to its original length when stretched or compressed.
Specifically, this force is given by $e^S = -k(\Delta q)$, where $\Delta q$ is the spring's displacement.
In both the Hamiltonian and our formulations, the potential energy $U$ is obtained by integrating this force $e^S$ with respect to the displacement $\Delta q$.
Conversely, the force $e^S$ is obtained as to the partial derivative of the potential energy $U$ with respect to the displacement $\Delta q$.
The spring's flow $f^S$ is the rate of extension, $\Delta\dot{q}$.

A mass $m$ has a velocity $v$, and its momentum is given by $p = mv$.
The kinetic energy is $\frac{1}{2} mv^2 = \frac{1}{2m} p^2$.
In general, velocity $v$ is more easily observed than momentum $p$ as the state of a mass.
Therefore, we provided velocity $v$ as the observation.

Coupling between two components means the output (effort) of one flows into the input (flow) of the other.
Hence, possible couplings are limited to interactions between potential and kinetic components, as shown in Table~\ref{tab:domains}.
Specifically, the following couplings are possible: mass and spring $\pderiv{p_i}\wedge\pderiv{q_j}$, mass and damper $\pderiv{p_i}\wedge\xi^R_j$, mass and external force $\pderiv{p_i}\wedge\xi^I_j$, moving wall and spring $\xi_i^I\wedge\pderiv{q_j}$, and moving wall and damper $\xi_i^I\wedge\xi^R_j$.
These combinations were incorporated as elements of the learnable bivector $B$, while all other possible pairings were fixed to zero.

In the absolute coordinate system, the positions $q_i$ are used to represent the states of the springs, and their displacements $\Delta q_i$ are then calculated within the potential energy function $U$.
Therefore, the potential energy function $U$ depends on all positions $q_i$ of springs and moving boundaries collectively.
In the relative coordinate system, the displacements $\Delta q_i$ are used instead, and in this case, the potential energy function $U_i$ is defined individually for each displacement $\Delta q_i$.

In both Dissipative SymODENs and PoDiNNs, since we assume to know the symbolic expression $\frac{1}{2m}p^2$ for the kinetic energy, this was used, rather than a neural network.
Also, the velocity $v$ was transformed into momentum $p$ using the learnable parameter $m$ with $p = mv$.
This approach was originally adopted by Dissipative SymODENs and is a realistic and practical choice.

For Dissipative SymODENs and PoDiNNs in the absolute coordinate system, a single neural network was trained to approximate the potential energy function $U$ using all positions $q_i$ of springs and moving boundaries as inputs.
In the relative coordinate system, a separate neural network $U_i$ was used for the potential energy of each spring $k_i$.

For Dissipative SymODENs, the dissipative terms $D$ and input gain $G$ were modeled using neural networks.
In the original paper~\citep{Zhong2020a}, $D$ was defined as a symmetric matrix that depended solely on the positions $q$ of the springs, which is suitable for modeling linear dampers.
To extend this to nonlinear dampers, we modified $D$ to also depend on the velocity $v$ of the masses.

\textbf{(a) Mass-Spring-Damper System with External Force}\quad
This system consists of three springs $k_i$ and three masses $m_i$ arranged sequentially and indexed by $i \in \{1, 2, 3\}$ from a fixed wall.
Two dampers $d_1$ and $d_3$ are placed in parallel with springs $k_1$ and $k_3$, respectively.
An external force is applied to mass $m_3$.

The masses were set to $m_i = 0.8 + 0.2i$.
The characteristics of the nonlinear spring were given by $k_i(\Delta q_i)= (0.1 + 0.1i) \Delta q_i + 0.1 \Delta q_i^3$.
The characteristics of the nonlinear dampers were defined as $d_i(v_i)= (0.1 - 0.02i) \, \operatorname{sgn}(v_i) |v_i|^{1/3}$, where $v_i$ denotes the extension velocity of the $i$-th damper, and $\operatorname{sgn}$ is the sign function that returns $1$ for a positive value and $-1$ for a negative value.
The initial positions of the springs and the initial velocities of the masses were sampled from the uniform distributions $U(-0.5, 0.5)$ and $U(-0.3, 0.3)$, respectively.
The external force $e^I(t)$ was defined as the sum of three sine waves, with each wave's amplitude, angular velocity, and initial phase sampled from the uniform distributions $U(0.2, 0.5)$, $U(0.1\pi, 0.2\pi)$, and $U(0, 2\pi)$, respectively.

\textbf{(b) Mass-Spring-Damper System with Moving Boundary}\quad
This system consists of three springs $k_i$ and three masses $m_i$ arranged sequentially and indexed by $i \in \{1, 2, 3\}$ from a moving wall $b$.
Three dampers $d_i$ are placed in parallel with springs $k_i$.
This potentially represents a building's response during an earthquake.

The masses were set to $m_i = 1.6 - 0.2i$.
The characteristics of nonlinear spring were given by $k_i = (0.6-0.1i) \Delta q_i + 0.1 \Delta q_i^3$, where $\Delta q_i$ denotes the displacement of the $i$-th spring.
The characteristics of nonlinear damper were defined as $d_i(v_i) = \tilde d_i\operatorname{sgn}(v_i) |v_i|^{1/3}$ for $\tilde d_1=0.10$, $\tilde d_2=0.05$, and $\tilde d_3=0.02$.
The initial positions of springs $k_i$ and the initial velocities of masses $m_i$ were sampled from the uniform distributions $U(-0.5, 0.5)$ and $U(-0.3, 0.3)$, respectively.
The position $q_b$ of moving wall $b$ was set as the sum of three sine waves, with each wave's amplitude, angular velocity, and initial phase sampled from the uniform distributions $U(0.2, 0.4)$, $U(0.05\pi, 0.2\pi)$, and $U(0, 2\pi)$, respectively.

In the absolute coordinate system, it is necessary to represent the potential energy of spring $k_1$ connected to the moving wall $b$.
For both Dissipative SymODENs and PoDiNNs, the position $q_b$ of the moving wall $b$ was included as part of the inputs to the neural network $U$ for the potential energy, along with the positions $q_i$ of all springs $k_i$.
It was also fed to Neural ODEs.

The effort $e^I$ of the moving wall $b$ was its velocity $v_b$, which was fed to PoDiNNs and Neural ODEs as part of the external inputs.
However, no such mechanism exists for Dissipative SymODENs.
Without this, models cannot represent the force generated by damper $d_1$, connected to moving wall $b$.

\textbf{(c) Mass-Spring System with Redundancy}\quad
This system consists of five springs $k_i$ and two masses $m_i$ arranged in 2-dimensional space, as shown in Fig.~\ref{fig:datasets} (c).
Masses $m_1$ and $m_2$ are connected to a fixed wall via springs $k_1$ and $k_2$, respectively.
These masses are also connected to each other by spring $k_3$.
Additionally, springs $k_4$ and $k_5$ diagonally connect masses $m_2$ and $m_1$ to the fixed wall, respectively, similar to cross braces.
Since there are no energy-dissipating components or external inputs, the total energy is conserved.
The masses were set to $m_1=5.0$ and $m_2=3.0$.
The natural lengths of spring $k_1$ and $k_2$ to $l_1=3.0$, that of $k_3$ to $l_3=4.0$, and those of $k_4$ and $k_5$ to $l_4=5.0$.
The characteristics of nonlinear spring were $k_1(\Delta q)=2.5\Delta q + 3.4\Delta q^3$, $k_2(\Delta q)=3.0\Delta q + 0.5\Delta q^3$, $k_3(\Delta q)=2.1\Delta q + 4.1\Delta q^3$, $k_4(\Delta q)=3.5\Delta q + 2.4\Delta q^3$, $k_5(\Delta q)=2.5\Delta q + 1.6\Delta q^3$ for the displacement $\Delta q$.
The initial positions of masses $m_1$ and $m_2$ were sampled from uniform distributions $U(-0.5, 0.5)$ in both $x$ and $y$ directions, and their velocities from $U(-0.1, 0.1)$.

Because of two masses in 2-dimensional space, the system has 4 degrees of freedom for configuration, or 8 when including velocities.
However, the observations were composed of the velocities of both masses and the displacements of all five springs, resulting in a 14-dimensional observation space $\gM$.
Hence, the dynamics is degenerate.

Due to the degeneracy, HNNs are not applicable~\citep{Greydanus2019}.
Also, since this degeneracy does not come from a holonomic constraint, CHNNs are also not applicable~\citep{Finzi2020b}.
In PNNs, we used a Real-NVP consisting of four coupling layers for the coordinate transformation.
Each coupling layer was made of a fully-connected neural network with two hidden layers of 200 units, followed by a hyperbolic tangent activation function.
The 14-dimensional observations were transformed, and the eight dimensions were extracted (with the remaining six dimensions considered constant) as input to the energy function of HNNs.


\subsection{Electrical Systems}
\textbf{Overview and Experimental Settings}\quad
Refer to Appendix~\ref{appendix:electric_circuit} for basic characteristics of capacitors and inductors.
Since the electric charge is analogous to displacement, a capacitor corresponds to a spring, while an inductor corresponds to a mass in mechanical systems.

The states of a capacitor $C$ and inductor $L$ are the electric charge $Q$ and magnetic flux $\varphi$, respectively, but these are difficult to observe directly.
We used the capacitor voltage $V_C = Q/C$ and inductor current $I_L = \varphi/L$ as the observations, which linearly correlate with their states.
PoDiNNs learned the element characteristics $C$ and $L$ as learnable parameters and internally performed the transformations $Q = CV_C$ and $\varphi = LI_L$.
This is the same approach used by Dissipative SymODENs for masses.
Strictly speaking, since PoDiNNs can learn coordinate transformations, they can approximate the dynamics even without such transformations.

In a mass-spring-damper system, the damper always has velocity as flow and force as effort, but the flow of a resistor can be either current or voltage.
This depends on the overall coupling pattern and the formulation step, though the total number remains constant.
If two resistors with different types of flow are coupled, PoDiNNs would result in a DAE, which cannot be solved explicitly.
Additionally, components that share the same effort type, such capacitors and current voltage sources, may couple.
In this case, an infinite current instantaneously flows into the capacitor, and its voltage matches that of the direct voltage source.
This dynamics also requires a DAE, rather than an ODE.
In this study, we assumed that the system does not include couplings requiring DAEs.
Methods capable of handling such circuits will be a topic for future research.

A separate neural network was used for approximating the characteristics of each energy-storing and dissipating component.

\textbf{(d) FitzHugh-Nagumo Model}\quad
FitzHugh-Nagumo model is a model for the electrical dynamics of a biological neuron, exhibiting oscillatory behavior when the external current $J$ is applied~\citep{Izhikevich2006e}.
The governing equations are written as:
\begin{equation}
  \begin{aligned}
    \dot V & = V - \frac{V^3}{3} - W + J, \\
    \dot W & = 0.08(V + 0.7 - 0.8W),
  \end{aligned}
\end{equation}
where $V$ denotes the membrane potential, $W$ is the recovery variable, and $J$ is the input.

A circuit representation consists of a resistor $R_2$, an inductor $L$, a capacitor $C$, a current voltage source $E$, and a tunnel diode $R_1$, with an external current $J$.
The characteristics of these elements are defined as $L = 1/0.08$, $R_2 = 0.8$, $C = 1.0$, and $E = -0.7$.
The current $I_{R_1}$ through tunnel diode $R_1$ is characterized by $I_{R_1}=D(V_{R_1}) = V_{R_1}^3/3 - V_{R_1}$, where $V_{R_1}$ is the voltage across the diode.
The membrane potential $V$ and recovery variable $W$ are represented by the capacitor voltage $V_C$ and the inductor current $I_L$.
In our formulation, resistor $R_2$ and diode $R_1$ have the current and voltage as their flows, respectively.
Also, because the voltage generated by the current voltage source $E$ is unchanged for any trials, it can be treated as a part of resistor $R_2$.

The initial values of $V$ and $W$ were sampled from the uniform distribution $U(-3.0, 3.0)$.
The external current $J$ was sampled at evenly spaced intervals within the range of 0.1 to 1.5 for each trajectory, and it was kept constant within each individual trajectory to evaluate whether each model can learn the oscillatory behavior.

Since the external current $J$ was kept constant, the variability of the dynamics was reduced, making the learning process easier.
To preserve the challenge of the task, only 30 trajectories were generated for the training subset.

\textbf{(e) Chua's Circuit}\quad
Chua's circuit is a nonlinear electronic circuit known for its chaotic behavior~\citep{Chua2007}.
It consists of linear elements (a resistor $R_1$, two capacitors $C_1$ and $C_2$, and an inductor $L$) and a nonlinear element $R_2$, known as Chua's diode.
With $R_1 = 1$, the governing equations are:
\begin{equation}
  \begin{aligned}
    \dot V_{C_1} & = \alpha (V_{C_2} - V_{C_1} - f(V_{R_2})), \\
    \dot V_{C_2} & = V_{C_1} - V_{C_2} + I_L,                 \\
    \dot I_L     & = -\beta V_{C_2},                          \\
    V_{R_2}      & =V_{C_1}.
  \end{aligned}
\end{equation}
The parameters are $\alpha = 1/C$ and $\beta = 1/L$.
The nonlinear function $f(V_{R_2})$ describes the voltage-current characteristic of Chua's diode:
\begin{equation}
  f(V_{R_2}) = m_1 V_{R_2} + 0.5 (m_0 - m_1) (|V_{R_2} + 1| - |V_{R_2} - 1|),
\end{equation}
where $V_{R_2}$ is the voltage across the diode, equal to $V_{C_1}$, and $m_0$ and $m_1$ are parameters.
In our formulation, both resistor $R_1$ and diode $R_2$ have voltages as flows.

We set $\alpha=15.6$, $\beta=28$, $m_0=-8/7$, and $m_1=-5/7$.
The initial values of $V_{C_1}$, $V_{C_2}$, and $I_L$ were sampled from the uniform distribution $U(-0.5, 0.5)$.

Due to its chaotic behavior, we set the time step size to $\Delta t=0.01$ and the number of training iterations to 1,000,000.

\subsection{Multiphysics}
\textbf{Overview}\quad
A multiphysics system involves components from different domains that are coupled together.
Our formulation inherently handles multiphysics systems as long as careful attention is paid to which components can and cannot be coupled.

\textbf{(f) DC Motor}\quad

In this system, a DC motor bridges the electro-magnetic and the rotational domains.
In the electric circuit, an inductor $L$ and a resistor $R$ represent the inductance and resistance of the motor's armature winding, respectively.
Additionally, a voltage source $E$ serves as an external input.
A massless pendulum rod of length $l$ is attached to the DC motor.
The pendulum's angle $\theta$ is measured from the vertical position, and its angular velocity is denoted by $\omega$.
With a mass $m$ at the rod's end and the gravitational acceleration $g$, the pendulum has a potential energy of $-mgl\cos\theta$ and experiences a torque of $-mgl\sin\theta$.
Additionally, friction $d$ occurs at the pivot point of the pendulum.

The DC motor generates torque $\tau_{DC}$ based on the current $I_{DC}$ through the armature.
We assume a linear relationship with constant $K$, such that $\tau_{DC} = K I_{DC}$.
Conversely, when the motor rotates at an angular velocity $\omega$, it produces a back electromotive force given by $V_{DC} = -K \omega$.
The governing equations are:
\begin{equation*}
  \begin{aligned}
    \dot\theta     & =\omega,                      \\
    ml^2\dot\omega & =-mgl\sin\theta+KI-d(\omega), \\
    L\dot{I}       & =-\omega K+E-R(I).
  \end{aligned}
\end{equation*}

We set $L=2.5$, $m=2.0$, $l=1.5$, $g=1.0$, and $K=0.5$, respectively.
Also, we set $d(\omega)=0.02\operatorname{sgn}(\omega) |\omega|^{1/3}$ for friction $d$, and $R(I)=0.05I^3$ for resistor $R$.

PoDiNNs can model this multiphysics system seamlessly.
In the magnetic domain, inductor $L$ has magnetic flux $\varphi=LI$ as its state, voltage as its flow, and current as its effort.
For the pendulum's motion in the rotational domain, the state is angular momentum $p=ml^2\omega$, the flow is torque, and the effort is angular velocity (see Table~\ref{tab:domains}).
Because the properties of the armature are represented by inductor $L$ and resistor $R$, the remaining function of the DC motor can be represented by a coupling between inductor $L$ and the pendulum's motion $p$, that is, the bivector element $K \pderiv{\varphi}\wedge\pderiv{p}$.
Both inductor $L$ and mass $m$ are considered to store kinetic energies, and this type of coupling is referred to as a gyrator.

Therefore, the only notable point for the implementation is that magnetic flux $\varphi$ and angular momentum $p$, which belong to different domains, can be coupled.
Since there are only a few elements in each domain, the coupling patterns are unique, and their strengths were set to 1.0 while keeping the coefficient $K$ of the bivector element $\pderiv{\varphi}\wedge\pderiv{p}$ learnable.


\textbf{(g) Hydraulic Tank}\quad
Consider a hydraulic tank storing incompressible fluid, which belongs to the hydraulic domain and can couple with components in the mechanical domain.
Let $V$ denote the volume of the fluid inside the tank.
The tank has a cross-sectional area $A$ and a fluid height $h$, giving the relationship $V = Ah$.
Let $\rho$ represent the density of the fluid per unit volume, and $g$ the gravitational acceleration.
The pressure $p$ exerted on the bottom of the tank per unit area is given by $p = \rho gh = \frac{\rho gV}{A}$, and the total force acting on the bottom surface is $\rho ghA$.

Assuming that fluid is supplied from the bottom of the tank, the potential energy stored in the tank can be expressed as:
\begin{equation*}
  U_V = \int_0^h \rho ghA \, \dee h = \frac{1}{2} \rho gh^2 A = \frac{\rho g}{2A} V^2.
\end{equation*}
Then, $\pderiv[U_V]{V}=\rho gV/A=p$.

Consider two cylinders attached in opposite directions at the bottom of the tank, with cross-sectional areas $a_1$ and $a_2$.
Each cylinder contains a piston, with masses $m_1$ and $m_2$, respectively.
The forces acting on the pistons are $-p a_1$ and $p a_2$, where the positive direction is towards $m_2$.
When piston $m_1$ moves by a displacement $q_1$, a volume of fluid $a_1 q_1$ flows into the tank.
Similarly, when piston $m_2$ moves by $q_2$, a volume $a_2 q_2$ flows out of the tank.
For simplicity, we assume that these inflows and outflows occur adiabatically, without any resistance.
However, compressible fluids, non-adiabatic process, fluid momentum, and fluid resistance could also be incorporated~\citep{Duindam2009}.

Each piston $m_i$ moves with a velocity $v_i$, and is connected to a fixed wall via a spring $k_i$ and a damper $d_i$ for each $i$.
An external force $F$ is applied to piston $m_2$.

The equations of motion for the system can be written as:
\begin{equation*}
  \begin{aligned}
    \dot V       & = a_1v_1-a_2v_2,               \\
    m_1 \dot v_1 & = -p a_1 - k(q_1) - d_1 (v_1), \\
    m_2 \dot v_2 & = p a_2 - k(q_2) - d_2 (v_2),  \\
  \end{aligned}
\end{equation*}

Each piston has kinetic energy, and each spring has potential energy.
The state of each piston can be expressed as momentum, $p_i = m_i v_i$, and the state of the spring can be described by the displacement $q_i$ for each $i$.
Thus, the total energy of the system is given by:
\begin{equation*}
  H = \sum_{i=1}^{2} \frac{p_i^2}{2m_i} + \sum_{i=1}^2 U_i(q_i) + \frac{\rho g}{2A} V^2,
\end{equation*}
where $U_i$ denotes the potential energy stored in spring $k_i$, as well as the characteristics of dampers $d_i$.

The above equations of motion can be expressed using the energy function $H$ with the following bivector $B$:
\begin{equation*}
\begin{aligned}
B=&
   a_1\left(\pderiv{p_1}\wedge\pderiv{V}\right)
  -a_2\left(\pderiv{p_2}\wedge\pderiv{V}\right)
  +\pderiv{p_1}\wedge\pderiv{q_1}
  +\pderiv{p_2}\wedge\pderiv{q_2}\\
  &+\pderiv{p_1}\wedge\xi^R_1
  +\pderiv{p_2}\wedge\xi^R_2
  -\pderiv{p_2}\wedge\xi^I,
\end{aligned}
\end{equation*}
where $\xi^R_i$ and $\xi^I$ represent the basis vectors of the spaces $\gF^R$ and $\gF^I$ for dampers $d_i$ and external force $F$, respectively.

From this, it seems that the hydraulic tank behaves similarly to a spring, but the key difference is that the coupling strength depends on the cross-sectional areas $a_1$ and $a_2$ of the cylinders.
This type of coupling is referred to as a transformer.
If the cross-sectional area of the tank is not constant, the tank would behave like a nonlinear spring.

We set the parameters as follows: $A=5.0$, $g=1.0$, $\rho=10.0$, $a_1=1.0$, $a_2=0.3$, $m_1=3.0$, and $m_2=1.0$.
The spring forces were defined as $k_i(q_i)=0.1q_i + 0.01q_i^3$, and the damping forces as $d_i(v_i)=(0.1 - 0.04i) \, \operatorname{sgn}(v_i) |v_i|^{1/3}$ for $i=1, 2$.

As the system tends to reach equilibrium around $V=5$, $q_1=-10$, and $q_2=6$, the initial fluid volume $V$ and the springs' displacements $q_1$ and $q_2$ were sampled from the uniform distributions $U(5 - 0.25, 5 + 0.25)$, $U(-10 - 0.3, -10 + 0.3)$, and $U(6 - 0.3, 6 + 0.3)$, respectively,
The initial velocities of the pistons were sampled from the uniform distribution $U(-0.3, 0.3)$.
The external force $e^I(t)$ was defined as the sum of three sine waves, where the amplitude, angular velocity, and initial phase of each wave were sampled from the uniform distributions $U(0.05, 0.2)$, $U(0.1\pi, 0.3\pi)$, and $U(0, 2\pi)$, respectively.

In Fig.~\ref{fig:results}, for clarity, we displayed the changes relative to $5$, $-10$, and $6$, rather than the actual values of $V$, $q_1$, and $q_2$.

\section{Additional Results and Discussions}\label{appendix:results}
\subsection{Additional Results}
\begin{figure}[t]
  \centering
  \scriptsize
  \tabcolsep=0mm
  \begin{tabular}{ccc}
    \includegraphics[scale=0.76]{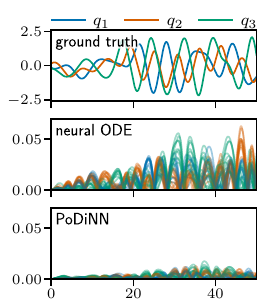} &
    \includegraphics[scale=0.76]{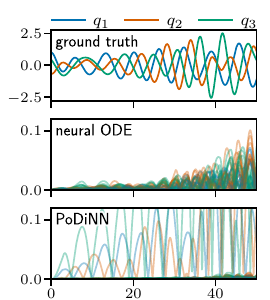}  &
    \includegraphics[scale=0.76]{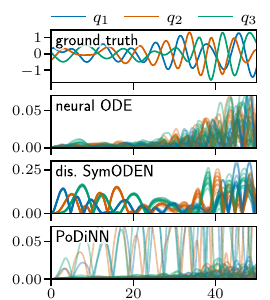}        \\
    (a) with external force                              &
    (b) with moving boundary                             &
    (b') with moving boundary                              \\
    in relative coordinate system                        &
    in relative coordinate system                        &
    in absolute coordinate system
  \end{tabular}\\[2mm]
  \begin{tabular}{cc}
    \includegraphics[scale=0.76]{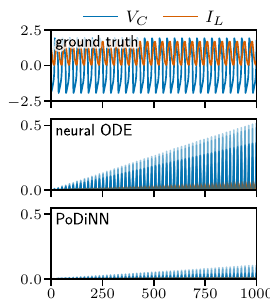} &
    \includegraphics[scale=0.76]{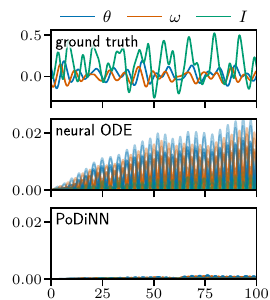}    \\
    (d) FitzHugh-Nagumo model                      &
    (f) DC motor                                     \\
  \end{tabular}
  \caption{Visualizations of example results.
    Each top panel shows ground truth trajectories, while the other panels show the absolute errors of all 10 trials in semi-transparent color.
    See also Fig.~\ref{fig:results}.}
  \label{fig:results2}
\end{figure}

\textbf{Additional Visualization}\quad
We show example visualizations of trajectories and absolute errors in Fig.~\ref{fig:results2} in addition to Fig.~\ref{fig:results}.
Across all datasets, PoDiNNs demonstrate superior predictive performance compared to other methods.
However, in both systems (b) and (b'), there was one failure.
This was due to minor inaccuracies in the learned bivectors, as discussed later.
While this issue only occurred twice out of 90 trials, it indicates that there is room for further improvement of the initialization and regularization of bivector elements.

\textbf{Identifying Coupling Patterns of System (b)}\quad
We examined how the bivector $B$ identifies the coupling patterns using system (b), that is, the mass-spring-damper system with moving boundary.
When two coefficients differ by a factor of 1,000 or more, we considered the larger one as a detected coupling and the smaller one as effectively zero, indicating no coupling.
The basis vector $\pderiv{q_i}$ corresponds to spring $k_i$, $\pderiv{p_i}$ to mass $m_i$, and $\xi^S$ to moving boundary $b$.

In the relative coordinate system, we obtained the bivector
$B=
  (\pderiv{p_1}-\xi^S)\wedge(\pderiv{q_1}+a_1\xi^R_1)
  +(\pderiv{p_2}-\pderiv{p_1})\wedge(\pderiv{q_2}+a_2\xi^R_2)
  +(\pderiv{p_3}-\pderiv{p_2})\wedge(\pderiv{q_3}+a_3\xi^R_3)$
in all 10 trials.
Here, the basis vectors $\xi^R_1$, $\xi^R_2$, and $\xi^R_3$ for dampers $d_1$, $d_2$, and $d_3$ were appropriately reordered because their indices are interchangeable.
$a_1$, $a_2$, and $a_3$ are trial-wise positive parameters.
This result matches the ground truth coupling pattern in Fig.~\ref{fig:datasets} (b).
Notably, the coefficients of other bivector elements, such as $\xi^S\wedge\pderiv{q_3}$ and $\pderiv{p_3}\wedge\pderiv{q_2}$, were effectively zero, indicating that PoDiNNs correctly identified the absence of non-existent couplings.

The coefficients expected to be $1$ or $-1$ were accurate to five significant figures in 9 out of 10 trials.
However, in one trial, there was a small error of approximately 0.002.
This trial corresponded with the failure case shown in Fig.~\ref{fig:results2}.
This suggests that even minor errors in the coefficients of the bivector elements can lead to significant prediction failures.
Nonetheless, as mentioned earlier, such occurrences are extremely rare.

\textbf{Identifying Coupling Patterns of System (b')}\quad
In the absolute coordinate system, we obtained the bivector
$B=
  \pderiv{p_1}\wedge\pderiv{q_1}
  +\pderiv{p_2}\wedge\pderiv{q_2}
  +\pderiv{p_3}\wedge\pderiv{q_3}
  +a_1(\pderiv{p_1}-\xi^S)\wedge\xi^R_1
  +a_2(\pderiv{p_2}-\pderiv{p_1})\wedge\xi^R_2
  +a_3(\pderiv{p_3}-\pderiv{p_2})\wedge\xi^R_3$
in 9 out of 10 trials, where coefficients with a difference of less than 1\% were considered identical.
The relationship between the masses and the springs is represented by the standard Poisson bivector $\sum_i \pderiv{p_i}\wedge\pderiv{q_i}$, as indicated by the first three terms.
Due to this, PoDiNNs cannot identify their specific coupling pattern.
The springs' displacements are computed internally within the potential energy function.
However, we can still identify how the dampers are coupled with the masses and the moving boundary, as indicated by the latter three terms.
In one trial, the coefficients of bivector elements between the dampers and masses were disorganized, and this trial corresponded with the failure case shown in Fig.~\ref{fig:results2}.

\textbf{Identifying Coupling Patterns of System (c)}\quad
We also examined case of system (c).
We denote the displacement of $i$-th spring in $x$-direction by $q_{xi}$.
The learned bivector $B$ was $B=
  \pderiv{m_{x1}}\wedge\pderiv{q_{x1}}
  +\pderiv{m_{y1}}\wedge\pderiv{q_{y1}}
  +\pderiv{m_{x2}}\wedge\pderiv{q_{x2}}
  +\pderiv{m_{y2}}\wedge\pderiv{q_{y2}}
  -\pderiv{m_{x1}}\wedge\pderiv{q_{x3}}
  -\pderiv{m_{y1}}\wedge\pderiv{q_{y3}}
  +\pderiv{m_{x2}}\wedge\pderiv{q_{x3}}
  +\pderiv{m_{y2}}\wedge\pderiv{q_{y3}}
  +\pderiv{m_{x2}}\wedge\pderiv{q_{x4}}
  +\pderiv{m_{y2}}\wedge\pderiv{q_{y4}}
  +\pderiv{m_{x1}}\wedge\pderiv{q_{x5}}
  +\pderiv{m_{y1}}\wedge\pderiv{q_{y5}}$ in all 10 trials.
The coefficients were exactly 1.0 or -1.0, accurate to five significant figures.
This is because the velocities of the masses and the displacements of the springs are all observable, and their scales are known.
However, the scales of spring constants $k_i$ and masses $m_i$ still cancel each other out, which means the parameters cannot be uniquely determined for the system as a whole.

The masses and springs are coupled in either the $x$- or the $y$-directions, which implies that the data is neatly separated along the $x$- and $y$-axes.
From the bivector $B$, we can say the following; mass $m_1$ is coupled with springs $k_1$ and $k_5$, while mass $m_2$ is coupled with springs $k_2$ and $k_4$; and spring $k_3$ is coupled with both masses $m_1$ and $m_2$, but the coupling to $m_1$ is in the opposite direction.
This means that the coupling pattern shown in Fig.~\ref{fig:datasets} (c) was fully identified.
By reorganizing the expression for $B$, we get
$B=
  \pderiv{m_{x1}}\wedge(\pderiv{q_{x1}}-\pderiv{q_{x3}}+\pderiv{q_{x5}})
  +\pderiv{m_{y1}}\wedge(\pderiv{q_{y1}}-\pderiv{q_{y3}}+\pderiv{q_{y5}})
  +\pderiv{m_{x2}}\wedge(\pderiv{q_{x2}}+\pderiv{q_{x3}}+\pderiv{q_{x4}})
  +\pderiv{m_{y2}}\wedge(\pderiv{q_{y2}}+\pderiv{q_{y3}}+\pderiv{q_{y4}}),
$
which indicates that system (c) has eight degrees of freedom.
In this way, PoDiNNs can identify the constraints and handle degenerate dynamics.

\begin{wrapfigure}{r}{0.33\textwidth}
  \vspace*{-8mm}
  \centering
  \scriptsize
  \tabcolsep=1mm
  \captionof{table}{Impact of \# Components and VPT.}
  \label{tab:ng}
  \vspace*{-1mm}
  \begin{tabular}{lcc}
    \toprule
    \textbf{PoDiNNs}      & \textbf{Training}            & \textbf{Test}                \\
    \midrule
    $n_d\!=\!0,n_g\!=\!0$ & 0.000\std{0.000}             & 0.000\std{0.000}             \\
    $n_d\!=\!1,n_g\!=\!0$ & 0.000\std{0.000}             & 0.000\std{0.000}             \\
    $n_d\!=\!2,n_g\!=\!0$ & 0.000\std{0.000}             & 0.000\std{0.000}             \\
    $n_d\!=\!3,n_g\!=\!0$ & 0.000\std{0.000}             & 0.000\std{0.000}             \\
    $n_d\!=\!4,n_g\!=\!0$ & 0.000\std{0.000}             & 0.000\std{0.000}             \\
    $n_d\!=\!5,n_g\!=\!0$ & 0.000\std{0.000}             & 0.000\std{0.000}             \\
    $n_d\!=\!0,n_g\!=\!1$ & 0.007\std{0.000}             & 0.000\std{0.000}             \\
    $n_d\!=\!0,n_g\!=\!2$ & 0.007\std{0.000}             & 0.000\std{0.000}             \\
    $n_d\!=\!0,n_g\!=\!3$ & 0.007\std{0.000}             & 0.000\std{0.000}             \\
    $n_d\!=\!0,n_g\!=\!4$ & 0.007\std{0.000}             & 0.000\std{0.000}             \\
    $n_d\!=\!0,n_g\!=\!5$ & 0.007\std{0.000}             & 0.000\std{0.000}             \\
    \midrule
    $n_d\!=\!1,n_g\!=\!1$ & \textbf{0.985}\std{0.002}    & \textbf{0.640}\std{0.070}    \\
    $n_d\!=\!1,n_g\!=\!2$ & \textbf{0.985}\std{0.000}    & \textbf{0.630}\std{0.038}    \\
    $n_d\!=\!2,n_g\!=\!1$ & \textbf{0.985}\std{0.003}    & \textbf{0.681}\std{0.062}    \\
    $n_d\!=\!2,n_g\!=\!2$ & \textbf{0.985}\std{0.006}    & \textbf{0.656}\std{0.108}    \\
    $n_d\!=\!2,n_g\!=\!3$ & \textbf{0.984}\std{0.001}    & \textbf{0.577}\std{0.072}    \\
    $n_d\!=\!3,n_g\!=\!2$ & \textbf{0.978}\std{0.008}    & \textbf{0.553}\std{0.061}    \\
    $n_d\!=\!3,n_g\!=\!3$ & \textbf{0.984}\std{0.008}    & \textbf{0.557}\std{0.040}    \\
    \bottomrule
    \\[-2.5mm]
                          & \verysmall{$\theta=10^{-3}$} & \verysmall{$\theta=10^{-3}$} \\
  \end{tabular}
  \vspace*{-9mm}
\end{wrapfigure}

\textbf{Impact of Number of Hidden Components for System (d)}\quad
In the electric circuits, the flow for resistors and diodes can be either current or voltage, depending on their coupling with other components.
Let $n_d$ denote the assumed number of resistors whose flow is current, and $n_g$ the assumed number of resistors whose flow is voltage.
We tested system (d), the FitzHugh-Nagumo model, to explore the impact of the assumed numbers $n_d$ and $n_g$, and summarized the results in Table~\ref{tab:ng}.
The correct numbers for this model are $n_d = 1$ and $n_g = 1$.
When fewer components are assumed, VPT values became significantly poor.
On the other hand, assuming more components yields similar accuracy to the correct configuration.
This trend is consistent in both the training and test subsets.
Therefore, similar to the case with system (b) in Table~\ref{tab:nd}, we can identify the number of unobservable energy-dissipating components and their flow types using the training subset.


\subsection{Additional Discussions}
\textbf{Non-Identifiability}\quad
The characteristic scales of the energy-storing and dissipating components (e.g., spring constants $k$, masses $m$, and damper constants $d$) and the coefficients of the bivector elements among them cancel each other out.
Hence, the overall scale of the system is indeterminate.
Once the coupling pattern has been identified, the coefficients of the bivector elements can be normalized, as shown in Section~\ref{sec:results}.
If external forces are present, they determine the scale of force, which, in turn, uniquely determines the scale of the coupled mass.
This cascading effect determines the system's overall scale.

When the characteristics of elements are linear, different coupling patterns can result in the same dynamics.
Additionally, a weighted average of these coupling patterns can also be a valid solution.

While non-identifiability may be problematic for system identification, it does not impact prediction accuracy.

\textbf{Passivity and Convexity}\quad
In understanding the behavior of dynamical systems, passivity can be a crucial concept~\citep{Khalil2002}.
Passivity implies that the system always dissipates energy and, without external inputs, converges to a stable invariant set.
In PoDiNNs, general neural networks are used to model energy-dissipating components, which may sometimes learn energy-supplying characteristics, like a negative resistance.
This flexibility is beneficial for learning special diodes used in the FitzHugh-Nagumo model or Chua's circuit, but it poses a problem when passivity is desired in the system.
A simple solution to this issue is to replace the general neural networks with ones specialized for functions that have zero output for zero input and are monotonic non-decreasing~\citep{Wehenkel2019}.
This ensures the system remains passive.
If convexity is required in the energy function, a neural network enforcing this property can be introduced~\citep{Amos2017a}.

PoDiNNs decompose coupled systems into individual components, along with their coupling patterns, and learn the specific characteristics of each component.
This structure allows for the introduction of constraints beforehand and ensure each component to meet the necessary properties.
This flexibility is another key advantage of PoDiNNs.

\end{document}